\documentclass{article}

\PassOptionsToPackage{numbers, compress}{natbib}

\usepackage[preprint]{neurips_2022_arxiv}




\usepackage{def}
\usepackage[utf8]{inputenc} 
\usepackage[T1]{fontenc}    
\usepackage[colorlinks=true, linkcolor=blue, citecolor=blue]{hyperref}       
\usepackage{url}            
\usepackage{booktabs}       
\usepackage{amsfonts}       
\usepackage{nicefrac}       
\usepackage{microtype}      
\usepackage{xcolor}         
\usepackage{algorithm}
\usepackage{algorithmic}
\usepackage{comment}
\usepackage{enumitem}
\usepackage{subcaption}
\usepackage{graphicx}

\usepackage{amsmath,amsfonts,bm}









\def\eqref#1{(\ref{#1})}









\def\1{\bm{1}}










\DeclareMathAlphabet{\mathsfit}{\encodingdefault}{\sfdefault}{m}{sl}
\SetMathAlphabet{\mathsfit}{bold}{\encodingdefault}{\sfdefault}{bx}{n}


\def\gF{{\mathcal{F}}}
\def\gG{{\mathcal{G}}}

\def\gM{{\mathcal{M}}}

\def\gO{{\mathcal{O}}}

\def\gS{{\mathcal{S}}}
\def\gT{{\mathcal{T}}}










\newcommand{\E}{\mathbb{E}}

\newcommand{\reg}{\mathrm{Regret}}





\newtheorem{lem}{Lemma}[section]
\newtheorem{assumption}{Assumption}

\title{Provably Efficient Reinforcement Learning for Online Adaptive Influence Maximization}

%

\author{%
Kaixuan Huang$^{1*}$ \quad Yu Wu$^{1*}$ \quad Xuezhou Zhang$^{1}$ \quad Shenyinying Tu$^{2}$ \\ \quad \textbf{Qingyun Wu}$^{3}$ \quad \textbf{Mengdi Wang}$^{1}$ \quad \textbf{Huazheng Wang}$^{1}$\\
$^{1}$Princeton University \quad $^{2}$LinkedIn \quad $^{3}$Penn State University\\
\texttt{\{kaixuanh, yuw, xz7392, mengdiw, huazheng.wang\}@princeton.edu}\\
\texttt{stu@linkedin.biz}, \texttt{qingyun.wu@psu.edu}
}

\definecolor{darkgreen}{rgb}{0,0.5,0}
\definecolor{darkred}{rgb}{0.7,0,0}
\definecolor{teal}{rgb}{0.3,0.8,0.8}
\definecolor{orange}{rgb}{1.0,0.5,0.0}
\definecolor{purple}{rgb}{0.8,0.0,0.8}
\definecolor{OliveGreen}{rgb}{0.7,0.7,0.3}

\begin{document}
\maketitle
\def\thefootnote{*}\footnotetext{Equal Contribution.}

\begin{abstract}
 
Online influence maximization aims to maximize the influence spread of a content in a social network with unknown network model by selecting a few seed nodes. Recent studies followed a non-adaptive setting, where the seed nodes are selected before the start of the diffusion process and network parameters are updated when the diffusion stops.
We consider an adaptive version of content-dependent online influence maximization problem where the seed nodes are sequentially activated based on real-time feedback. In this paper, we formulate the problem as an infinite-horizon discounted MDP under a linear diffusion process and present a model-based reinforcement learning solution. Our algorithm maintains a network model estimate and selects seed users adaptively, exploring the social network while improving the optimal policy optimistically. We establish $\widetilde \gO(\sqrt{T})$ regret bound for our algorithm. Empirical evaluations on synthetic network demonstrate the efficiency of our algorithm. 
  
\end{abstract}

\section{Introduction}

Influence Maximization (IM) \cite{kempe2003maximizing,kitsak2010identification,centola2007complex}, motivated by real-world social-network applications such as viral marketing, has been extensively studied in the past decades. In viral marketing, a marketer selects a set of users (seed nodes) with significant influence for content promotion. These selected users are expected to influence their social network neighbors, and such influence will be propagated across the network. With limited seed nodes, the goal of IM is to maximize the information spread over the network. A typical IM formulation models the social network as a directed graph and the associated edge weights are the propagation probabilities across users. Influence propagation is commonly modeled by a certain stochastic diffusion process, such as independent cascade (IC) model and linear threshold (LT) model \cite{kempe2003maximizing}. A popular variant is topic-aware IM \cite{chen2015online, chen2016real} where the activation probabilities are content-dependent and personalized, i.e., edge weights are different when propagating different contents. 

Classical influence maximization solutions are studied in an offline setting, assuming activation probabilities are given \cite{kempe2003maximizing,chen2009efficient, chen2010scalable}. However, this information may not be fully observable in many real-world applications. Online influence maximization \cite{chen2013combinatorial, wen2017online, vaswani2017model} has recently attracted significant attention to tackle this problem, where an agent learns the activation probabilities by repeatedly interacting with the network. Most existing works on online influence maximization are formulated as a multi-armed bandits problem making a \emph{non-adaptive} batch decision: at each round, the seed nodes are computed prior to the diffusion process by balancing exploring the unknown network and maximizing the influence spread; the agent observes either edge-level \cite{chen2013combinatorial, wen2017online, wu2019factorization} or node-level \cite{vaswani2017model, li2020online} activations when the diffusion finishes and updates its model. Combinatorial multi-armed bandits \cite{chen2013combinatorial, wang2017improving} and combinatorial linear bandits \cite{wen2017online, wu2019factorization} algorithms have been proposed as solutions, where most works follow independent cascade model with edge-level feedback.

In contrast to the non-adaptive setting, adaptive influence maximization allows the agent to select seed nodes in a sequential manner after observing partial diffusion results \cite{golovin2011adaptive, tong2016adaptive,peng2019adaptive}. The agent can achieve a higher influence spread since the decision adapts to the real-time feedback of diffusion. In viral marketing, the agent could observe partial diffusion feedback from the customer and adjust the campaign for the rest of budgets based on current diffusion state. Unfortunately, online influence maximization in an adaptive setting is under-explored. Previous bandit-based solutions cannot be applied because the decisions of bandit algorithms are independent of the network state.

In this paper, we study the content-dependent \emph{online adaptive influence maximization} problem: at each round, the agent selects a user-content pair to activate based on current network state, observes the immediate diffusion feedback, and updates its policy in real-time. The network's activation probabilities are content-dependent and are unknown to the agent. The agent's goal is to maximize the total influence spread. We formulate this problem as an infinite-horizon discounted Markov decision process (MDP), where the state is users' current activation status under different contents (user-content pairs), an action is to pick a user-content pair as the new seed, and the total reward is the discounted sum of active user counts.
Specifically, we study the problem under the independent cascade model with node-level feedback. Similar to combinatorial linear bandits \cite{wen2017online, vaswani2017model}, we formulate a tensor network diffusion process where activation probabilities are assumed to be linear with respect to both user and content features. To tackle the problem of node-level feedback, we propose a Bernoulli independent cascade model, a linear approximation to the classic IC model which requires edge-level feedback to learn. 

We propose a model-based reinforcement learning (RL) algorithm to learn the optimal adaptive policy. Our approach builds on prior work of bandit-based influence maximization algorithms \cite{chen2013combinatorial, wen2017online, vaswani2017model} and has the following distinct features: (1) Our adaptive IM policy makes decisions and updates policy on the fly, without waiting till the end of diffusion process; (2) Our algorithm takes into consideration real-time feedback from the network, thus approaching a dynamic-optimal policy and outperforming bandit-based static-optimal solutions; (3) Our algorithm learns from node-level feedback, which greatly relaxes the common edge-level feedback assumption in previous works with IC model; (4) Our policy can handle content-dependent networks and select the best content for the right user for the campaign; (5) To improve computation efficiency, we adopt the low switching cost strategy \cite{abbasi2011improved} that only update model parameter for $\gO(d\log T)$ times, where $d$ is the feature space dimension.  
Our contributions are summarized as follows:
\begin{itemize}[leftmargin=*,itemsep=0pt]
    \item We propose a linear tensor diffusion model for content propagation in social networks and formulate the problem as an infinite-horizon discounted MDP.
    \item We propose a tensor-regression-based RL influence maximization algorithm with optimistic planning that learns an adaptive policy from node-level feedback, which selects the content and next seed user based on current state of the network.
    \item We proved a $\widetilde \gO( d\sqrt{T}/\Delta+ \sqrt{dNKT})$\footnote{$\widetilde \gO(\cdot)$ ignores all logarithmic terms.} regret of our algorithm, where $T$ is the total rounds, $N$ is the number of users, $K$ is the number of contents, $\Delta$ is the coefficient for diffusion decay, $d$ is the dimension related to user and content feature. To our best knowledge, this is the first sublinear regret bound for online adaptive influence maximization. 
    \item We empirically validated on synthetic network that our algorithm explores the unknown network more thoroughly than conventional bandit methods, achieving larger influence spread.  
\end{itemize}

\paragraph{Related Works.} The classical works on (offline) influence maximization \citep{kempe2003maximizing, chen2009efficient,chen2010scalable} assume the network model, i.e., the activation probabilities, is known to the agent and the goal is to  maximize the influence spread, i.e., total number of activated users. IM has been studied in a non-adaptive setting where the agent chooses the seed nodes before the diffusion starts \cite{kempe2003maximizing, chen2009efficient,chen2010scalable, bourigault2016representation, Netrapalli:2012:LGE:2318857.2254783,Saito:2008:PID:1430307.1430318}, or an adaptive setting where the agent sequentially selects the seed nodes adaptive to current diffusion results \cite{golovin2011adaptive, tong2016adaptive,han2018efficient,peng2019adaptive,tong2020adaptive}.
Online influence maximization \cite{chen2013combinatorial, chen2015online, lei2015online, lugosi2019online, perrault2020budgeted, zuo2022online} is proposed to learn network model while selecting seed nodes in the non-adaptive setting. Existing works on online IM studies mostly follow IC model and edge-level feedback \cite{chen2013combinatorial, wang2017improving, wen2017online, vaswani2017model, lugosi2019online}. 
\citet {chen2013combinatorial} and  \citet{wang2017improving} formulated the online IM problem as combinatorial bandits problem and proposed combinatorial upper confidence bound (CUCB) algorithm to estimate the activation probabilities of edges in a tabular manner. %
\citet{wen2017online} assumed a linear parameterization on each edge with known edge features and proposed a linear bandits-based solution. \citet{wu2019factorization} considered the influence power and susceptibility on each node as two unknown latent parameters and proposed a matrix factorization-based solution. Our paper is the first to consider online influence maximization in the adaptive setting and formulate it as an RL problem. We also can handle the more challenging node-level feedback.
Some recent works also explored settings beyond IC model and edge-level feedback. \citet{li2020online} studied online IM with linear threshold model, and proposed a linear bandits-based solution to model the linearity in LT model for node-level feedback. We also leveraged the linearity in diffusion model to handle node-level feedback similar to \cite{li2020online} but for IC model. \citet{vaswani2017model} considered diffusion model-independent setting using a heuristic objective function, but without theoretical guarantee of the heuristic. \citet{olkhovskaya2018online} studied UCB-based algorithm for node-level feedback with theoretical guarantee, but their algorithm is designed only for certain random graph models such as stochastic block models and Chung–Lu models.

Our analysis is related to regret analysis of model-based reinforcement learning, which have been studied in various settings such as tabular MDP \cite{auer2008near}, linear/kernel MDP \cite{yang2020reinforcement,NEURIPS2020_9fa04f87}, factored MDP \cite{rosenberg2021oracle}, general model class \cite{ayoub2020model}, etc. We provide a first problem-specific analysis for influence maximization. Our analysis differs from existing regret analysis in a couple of ways. First, although we focus on a linear model for network diffusion, the state-to-state transition of the IM is highly nonlinear, thus the value and Q functions for IM do not admit a linear model and invalidate linear/kernel MDP approaches. Second, due to the nature of network diffusion process, the state and its value can grow unboundedly for large networks, causing unbounded variance at the same time. Our analysis is specially tailored to such growth process over large networks and derive regret bound by focusing a high probability event where states stay bounded. To our best knowledge, this is a first IM-specific regret analysis for controlling unbounded growth process over large networks.

\section{Problem Formulation}

We present a tensor network diffusion process to model user feature-dependent content feature-dependent network propagation. Our goal is to both select seed users and customize contents for influence maximization. Further, we formulate IM into an RL problem to enable much more delicate control of the network diffusion process based on real-time feedback. 

\subsection{Tensor Network Diffusion Process}

Consider a social network of $N$ users, where the network structure may be hidden. Let there be $K$ choices of contents.
Let $s_{i,k}\in\{0,1\}$ denote the status of an user-content pair, i.e., $s_{i,k} = 1$ if user $i$ is actively tweeting content $k$.  
The full state of the network is denoted by $s \in \{0,1\}^{N \times K}$, a binary matrix.
We focus on the \textbf{asymptotic region of large networks}, i.e., $N$ can be arbitrarily large or even $N\to\infty$. In this regime, we have $T\ll N$, in other words, one has only a little time to learn about a huge or even infinite network.  

We assume that each content can be propagated from one user to multiple users following an independent network diffusion process. 
\begin{assumption}[Bernoulli Independent Cascade Model\label{bernoulli}] 
Let $s'$ be the next state. For each $k\in[K]$, we assume there is an underlying connectivity matrix $\Ab^k \in \mathbb{R}^{n \times n}$ such that
\begin{equation}
    \mathbb{P}(s_{i,k}' = 1|s) = \sum_{j} \Ab^k_{i,j} s_{j,k},
    \label{eq:user-content-transition}
\end{equation}And we assume $s_{i,k}'$'s are independent conditioned on $s$.
\end{assumption}
Here $\Ab^k_{i,j}$ measures the level of influence user $j$ has over user $i$ for the $k$-th content. Therefore, the aggregate ``influence'' received by user $i$ is $\sum_{j} A^k_{i,j} s_{j,k}$. We model the status of user $i$ as a Bernoulli variable, which is parameterized by the aggregate ``influence'' received by user $i$.

Our model is closely related to the independent cascade model \cite{kempe2003maximizing}. In IC model, the activation probability takes of the form
$  \mathbb{P}(s_{i,k}' = 1|s) =  1-\prod_{j}(1- \Ab^k_{i,j} s_{j,k}).$ A limitation is that efficient estimation of IC model requires edge-level observations \cite{chen2013combinatorial, wang2017improving}.
Assumption \ref{bernoulli} can be viewed as an linearized approximation to IC model, i.e., $   1-\prod_{j}(1- \Ab^k_{i,j} s_{j,k}) \approx \sum_{j} \Ab^k_{i,j} s_{j,k}$ when all the $A$ values are tiny.
This linearization gives an $\gO(1/N)$ gap measured in total variation-divergence, which would yield a $\gO(T/N)$ gap in the regret. Since $T\ll N$, we consider the gap negligible in the rest of the paper.  
%
%
%
In Appendix~\ref{sec:generalized}, we extend the assumption to the generalized linear setting and establish the regret bound for our algorithm.

Consider a parameterized network diffusion model based on user features and content features. Let the $i$-th user be associated with a user feature vector $x_i \in \mathbb{R}^{d_1}$, for all $i\in[N]$. Let the $k$-th content be associated with a content feature  $\theta_k \in \mathbb{R}^{d_2}$, for all $k\in[K]$. We assume that the influence is linear with respect to both user and content feature. 
\begin{assumption}[Linear Tensor Model]\label{assump:2}
There exists a $d_1\times d_1\times d_2$ tensor $\gT^* \in \mathbb{R}^{d_1 d_1 d_2} $ such that
\[
    \Ab _{i,j}^k = \langle \gT^*,  x_i \otimes x_j \otimes \theta_k \rangle,
\]
where $\otimes$ denotes outer product and $\langle,\rangle$ denotes inner product.
\end{assumption} 

Note that this is different from the linear MDP model commonly studied in the theoretical RL literature \cite{jin2020provably}. We focus on large networks where $N$ can be arbitrarily large. We also assume each individual user has bounded influence over its neighbors and the diffusion process has a natural decay property.  

\begin{assumption}[Uniform transition probability upper bound]\label{assump:utpub}
There exists a constant $C>0$ such that 
$
    \|\Ab\|_{\infty}\leq\frac{C}{NK}.
$
\end{assumption}

\begin{assumption}[Diffusion decay\label{decay}] 
There exists $\Delta>0$ such that $\sum_{i} \Ab^k_{i,j} \leq 1 - \Delta $ for all $k,j$.
\end{assumption}

Assumption \ref{decay} says that influence from any seed user has a discounting nature; without this assumption, some seed user may have infinite-long influence and make the diffusion process unbounded. This assumption also implies that, the ``influence" of any seed user-content pair would last $\gO(1/\Delta)$ time steps.

\subsection{Reinforcement Learning Model}

We formulate the influence maximization problem as an infinite-horizon discounted MDP. Define the state space as $\gS = \{0,1\}^{N \times K}$ where $1$ refers to activated user-content pair. %
At each timestep, the agent observes the current network state $s$ and picks an action $a\in\mathcal{A}:=[N]\times[K]$ to activate one user-content pair. Let $s_a$ be the post-action state, i.e., $s_a=s+\mathbf{1}_a$. Then the state of network transitions following the network diffusion process, i.e., Assumptions~\ref{bernoulli},\ref{assump:2}. 
Since users are activated independently from one another, the state-transition law of the MDP admits a factored structure: 
$$
    \mathbb{P}(s'|s,a) =  \prod_{i,k} \mathbb{P}(s_{i,k}'|s,a).
$$ 
At each state-action pair, the agent receives a reward $r(s,a) = \sum_{i,k} v_{i,k}s_{i,k}$ measuring the amount of influence over the network. For examples, if we let $v_{i,k} \equiv 1$, then we have $r(s,a)=\|s\|_1$, which counts the number of active users. Without lost of generality, we assume $v_{i,k} \leq 1$. 
Let $\pi:\mathcal{S}\mapsto\mathcal{A}$ be a decision policy. We measure the {\it value of policy $\pi$ at state $s$} as a cumulative sum of discounted rewards
\[
    V^\pi(s) = \EE^\pi \left[\sum_{t=1}^\infty \gamma^{t-1} r(s_{t}, a_{t}) \Big | s_1 = s \right].
\]
Recall Assumption~\ref{decay} that the influence of any action lasts $1/\Delta$ time steps. Thus, a natural choice of the discount factor to be $\gamma = 1- o(\Delta)$.  
Finally, the policy optimization problem is to find $\pi^*=\argmax_{\pi} V^{\pi}(s)$.
\paragraph{Relation between Discounted MDP and Bandit IM model.}
The discounted MDP formulation differs from the bandit IM optimization in two ways. (1) Our policy is dynamic and makes state-dependent decisions, while the bandit approach would make a batch of decisions only at the beginning of the diffusion process;
(2) In both cases, the optimization objectives are sums of total influences from all seed users. The difference lies in how to measure the per-seed influence. In IM bandit, the per-seed influence is a cumulative sum calculated after the diffusion process is over. In our formulation, the per-seed influence is a cumulative $\gamma$-discounted sum of rewards from this seed's descendants.
If we choose $\gamma=1 - o(\Delta)$, these values differ by only $o(1)$ and we can make the difference arbitrarily small.

\section{Algorithm} \label{sec:algorithm}
 
To reduce the statistical complexity, we adopt a model-based RL approach for exploring the unknown network and learning the optimal policy. Our approach alternates between model estimation and policy update. Our algorithm calculates a bonus function based on the collected data and and add it to the reward, which dynamically trades-off between exploitation and exploration. We also adopt a slow switching technique to reduce computational burden.
 
\textbf{Tensor ridge regression for model estimate.}
 
Under the linear tensor model (Assumption~\ref{assump:2}), we can use tensor ridge regression to perform model-based RL. This reduces the statistical complexity since the dimension of the unknown parameter is smaller. Furthermore, this approach only requires node-level feedback, While previous bandits approaches for IC model require edge-level feedback \cite{chen2013combinatorial, wen2017online, wu2019factorization}.

Specifically, let $s_{a}$ be the altered state after applying action $a$. 
Observe that, conditioned on $(s,a)$, the random variable $s'_{i,k}$ satisfies a linear relation:
\[ \textstyle
    \E[s'_{i,k} |s,a] = \sum_{j} \Ab^k_{i,j} (s_{a})_{j,k}=  \left\langle \gT^*, x_i \otimes \left(\sum_{j} x_j \cdot (s_{a})_{j,k} \right)  \otimes \theta_k \right\rangle .
\]
Denote for short   
$
    \phi_{i,k} (s,a) = x_i \otimes \left(\sum_{j} x_j \cdot (s_{a})_{j,k} \right) \otimes \theta_k \in \RR^{d_1 d_1 d_2},
$
and $\phi^t_{i,k} = \phi_{i,k} (s_t,a_t)$. At time $t$, after observing the history $(s_1, a_1,\dots, s_{t-1}, a_{t-1}, s_t)$, we estimate the tensor model by :
\[
   \widehat {\gT}_t = \argmin_{\gT} \sum_{\tau=1}^{t-1} \sum_{k=1}^K \sum_{i=1}^N (\langle \gT, \phi_{i,k}^\tau \rangle - (s_{\tau+1})_{i,k})^2 + \lambda \| \gT \|_2^2,
\]
where $\| \gT \|_2^2$ is calculated by vectorizing $\gT$.
This allows an analytical solution:
\begin{equation}
    \widehat{\gT}_{t} = \Sigma_{t-1}^{-1} B_{t-1},  \label{eq:That}
\end{equation}
where
\begin{equation}
    \Sigma_{t-1} = \lambda \boldsymbol I + \sum_{\tau=1}^{t-1} \sum_{k=1}^K \sum_{i=1}^N \phi^\tau_{i,k}\cdot (\phi^\tau_{i,k})^\top.  
\qquad
    B_{t-1} =  \sum_{\tau=1}^{t-1} \sum_{k=1}^K \sum_{i=1}^N \phi^\tau_{i,k} \cdot (s_{\tau+1})_{i,k}. \label{eq:Sigma}
\end{equation}

\textbf{Optimistic Planning with truncated-reward model.}
To avoid the worst-case $O(NK)$ reward, we identify a high probability upper bound $\Lambda$ for the rewards and truncate the reward as $\widetilde r (s,a) = \min\{r(s,a), \Lambda\}$.
Then based on the ridge regression estimation $\widehat{\gT}_t$, we add a bonus term $b_t(s,a)$ to the truncated reward $\widetilde r$ and solve for an optimistic Q-function $Q^*_{\widehat{\gT}_t, \widetilde r+b_t}(s,a)$ using the model estimate. 

Specifically, we can choose $\Lambda = \frac{6}{\Delta^2} \log(4NKT^3)$. For $\widehat{\gT}_t$, we define the reward bonus as  
\begin{equation}
    b_t(s,a) = \frac{2 \gamma \Lambda}{1-\gamma} \sum_{i=1}^N \sum_{k=1}^K (1 \wedge \beta_t \cdot \| \phi_{i,k} (s,a) \|_{\Sigma_{t-1}^{-1}} ) ,
    \label{eq:bonus}
\end{equation}
where we use the notation $1 \wedge x = \min\{1, x\}$ and   
\begin{align}
    \beta_t =&   \Big(\frac{24}{\Delta} \sqrt{ {C_A}/{(NK)} \cdot d \cdot  \log(1+NK L^2 t/(d\lambda))} + 4 \Big) \log(8N^2K^2t^2/\delta) +  \sqrt{\lambda}\|\gT^*\|_2 \label{eq:beta:t}
\end{align}
with $L$ being an upper bound of $\|\phi_{i,k}^t\|_2$  and $d = d_1^2d_2$.

This choice of $\beta_t$ ensures with high probability, $Q^*_{\widehat{\gT}_t, \widetilde r+b_t}(s,a)$ is an upper bound of $\widetilde Q^*(s,a)$, which is the optimal Q-function for ground-truth transition with truncated-reward. 
We calculate the optimal truncated Q-function $Q^*_{\widehat{\gT}_t, \widetilde r+b_t}(s,a)$ using value iteration with truncation (Algorithm~\ref{algo:vit}).
\paragraph{Slow switching.} To reduce computation overhead, we adopt a slow switching technique from bandit and RL literatures \citep{abbasi2011improved, zhou2021provably}. The idea is that we only update model and policy when enough new data has been collected, via checking the covariance matrix. Specifically, say the most recent switching happens at time $t$, we choose to switch at time $t'$ only if $$ \det(\Sigma_{t'-1}) > 2 \det(\Sigma_{{t}-1}).
$$
After switching, we calculate the optimistic Q-function $Q_{t'} = Q^*_{\widehat{\gT}_{t'}, \widetilde r+b_{t'}}(s,a)$. Then we pick actions greedily using $Q_t'$, i.e.,
$ a = \argmax_{a} Q_{t'}(s,a)$, until the next switching.
 
\begin{algorithm}[h]
    \caption{Model-based RL for Influence Maximization (\textsc{Morima})} \label{algo:1}
    \begin{algorithmic}[1]  
        \STATE Initialize $\Sigma_1 = \lambda \boldsymbol I $, $B_1 = \boldsymbol 0$. $Z = \det(\Sigma_1)$.
        \STATE  Calculate $\widehat{\gT}_1$ and $b_1(s,a)$ and  compute $Q_1 = Q^*_{\widehat{\gT}_1, \widetilde r+b_1}(s,a)$.
        \STATE Take the greedy action with respect to $Q_1$: $a_1 = \argmax_a Q_1(s_1,a)$.
        \FOR{$t=2, \cdots, $}
            \STATE Calculate $\Sigma_{t-1}$ and $B_{t-1}$ according to Eqn.~\eqref{eq:Sigma}.
            \IF{$\det(\Sigma_{t-1}) > 2Z $}
                \STATE Calculate $\widehat{\gT}_t$ and $b_t(s,a)$ according to Eqn.~\eqref{eq:That} and Eqn.~\eqref{eq:bonus}. 
                \STATE Compute the optimistic Q-function $Q_t = Q^*_{\widehat{\gT}_t, \widetilde r+b_t}(s,a)$ (Algorithm~\ref{algo:vit}). 
                \STATE Set $Z = \det(\Sigma_{t-1})$.
            \ELSE 
                \STATE Set $Q_t = Q_{t-1}$.
            \ENDIF
            \STATE Take the greedy action with respect to $Q_t$: $a_t = \argmax_a Q_t(s_t,a)$.
        \ENDFOR
    \end{algorithmic}
\end{algorithm}

\begin{algorithm}[h]
    \caption{Truncated Value Iteration} \label{algo:vit}
    \begin{algorithmic}[1]
    \STATE \textbf{Input:} parameter  $\gT$, reward $\widetilde r(s,a)$, bonus term $b(s,a)$.
        \STATE Initialize $Q(s,a) = \frac{\Lambda}{1-\gamma}$.
        \WHILE{Not Converged}
             \item $Q(s,a) \gets \min \{\frac{\Lambda}{1-\gamma},\widetilde r(s,a) + b(s,a) + \gamma \EE_{s'\sim \PP_\gT(\cdot |s,a)} \max_{a'} Q(s',a') \}  $
        \ENDWHILE
    \STATE \textbf{Return}: Optimistic Q function $Q(s,a)$.
    \end{algorithmic}
\end{algorithm}

\paragraph{Full algorithm.}
We put together the pieces and present the full Algorithm~\ref{algo:1}. 
The algorithm makes only $\gO(d\log(T))$ model updates and policy updates until time $T$. Each model update can be done efficiently using least square regression. Policy updates require  solving a new planning problem which can be combinatorially hard. In practice, one can solve the planning problem using Monte-Carlo Tree Search (MCTS) methods \cite{browne2012survey}, which work well as an approximated planner in our experiments. For theoretical analysis, we assume access to a planning oracle that is able to find the optimal policy with respect to a known model $\widehat{\gT}$. Relaxing such assumption to an approximated planning oracle can be also be done with minor algorithmic and analysis modifications.

\section{Regret Analysis}

In this section, we provide regret analysis for Algorithm~\ref{algo:1}. We define the regret for the infinite-horizon discounted MDP as in \citep{zhou2021provably}.  
\begin{definition} For any possibly non-stationary policy $\pi$, the infinite-horizon discounted regret is defined as 
\[
    \reg (T) = \sum_{t=1}^T \Delta_t, \text{ where } \Delta_t = V^*(s_t) - V_t^\pi(s_t),
\]
where $V^*$ is the optimal value function, and $V_t^\pi$ is defined as 
\[
    V_t^\pi(s) = \EE^\pi \Big[\sum_{i=0}^\infty \gamma^i r(s_{t+i}, a_{t+i}) | s_1, \dots, s_{t-1}, s_t = s \Big]
\]
\end{definition}

Now we present our main theorem.  

\begin{theorem}\label{thm:main}
Let Assumptions~\ref{bernoulli}-\ref{decay} hold. With probability at least $1-\delta$, Algorithm~\ref{algo:1} satisfies the following regret upper bound:
\[
        \reg (T) \leq \widetilde{\gO} \Big( \frac{1}{\Delta ^2(1-\gamma)^2} \cdot \big(d\sqrt{C }/\Delta + \sqrt{dNK} \big) \cdot \sqrt{T}  \Big) + \mathrm{polylog}(T)\text{-}\mathrm{terms}, 
\]
where $d = \mathrm{dim} (\gT^*) = d_1^2d_2$.
\end{theorem}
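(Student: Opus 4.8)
The plan is to follow the optimism‑based regret analysis for model‑based RL, specialized to the network‑diffusion structure. Two ingredients absent from the standard template are needed: a \emph{variance‑aware} self‑normalized concentration bound for the tensor regression (responsible for the $\sqrt{NK}$ rather than $NK$ scaling), and a high‑probability truncation of the state so that the unbounded growth never materializes. First I would fix three good events, intersected so that all hold with probability at least $1-\delta$: (i) the self‑normalized event $\|\widehat{\gT}_t - \gT^*\|_{\Sigma_{t-1}} \le \beta_t$ for all $t$; (ii) a bounded‑state event $r(s_t,a_t) = \|s_t\|_1 \le \Lambda$ for all $t\le T$; and (iii) the martingale‑difference concentration used in the final telescoping. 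Event (i) follows from a Bernstein‑type vector‑martingale inequality applied to the $NK$ per‑coordinate residuals $(s_{\tau+1})_{i,k} - \langle\gT^*,\phi^\tau_{i,k}\rangle$: these lie in $[-1,1]$ and, by Assumption~\ref{assump:utpub}, have conditional variance $O(1/(NK))$, which is exactly what produces the $\sqrt{C/(NK)}$ factor inside $\beta_t$ in \eqref{eq:beta:t}. Event (ii) is where Assumption~\ref{decay} enters: the active set behaves like a sub‑critical process — each newly seeded user‑content pair spawns only $O(1/\Delta)$ descendants in expectation — so the cumulative number of activations up to time $T$ concentrates, and a Freedman‑type bound with the correct variance proxy keeps it below $\Lambda = \widetilde{\gO}(1/\Delta^2)$ uniformly over $t\le T$. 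On event (ii) the truncated reward $\widetilde r$ agrees with $r$ along the realized trajectory, so truncation is unbiased; the IC‑vs‑Bernoulli gap and the choice $\gamma = 1-o(\Delta)$ contribute only $o(1)$ and are absorbed into the polylog terms.

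Second, I would establish \emph{optimism}: on event (i), truncated value iteration returns $Q_t(s,a) = Q^*_{\widehat{\gT}_t,\widetilde r + b_t}(s,a) \ge \widetilde{Q}^*(s,a)$ for every $(s,a)$, where $\widetilde{Q}^*$ is optimal for the true transition with truncated reward. This is an induction on the truncated Bellman operator; the crux is bounding the model‑evaluation error $\gamma\,\bigl|(\PP_{\widehat{\gT}_t} - \PP_{\gT^*})V\bigr|(s,a)$ for any $V$ with $0 \le V \le \Lambda/(1-\gamma)$. Because the transition factorizes, $\PP(s'|s,a) = \prod_{i,k}\PP(s'_{i,k}|s,a)$, and flipping a single coordinate $s'_{i,k}$ changes $V$ by at most $\Lambda/(1-\gamma)$, a coordinatewise coupling gives
\[
  \bigl|(\PP_{\widehat{\gT}_t} - \PP_{\gT^*})V\bigr|(s,a) \le \frac{\Lambda}{1-\gamma}\sum_{i,k}\bigl|\langle\widehat{\gT}_t - \gT^*,\phi_{i,k}(s,a)\rangle\bigr| \le \frac{\Lambda}{1-\gamma}\sum_{i,k}\beta_t\,\|\phi_{i,k}(s,a)\|_{\Sigma_{t-1}^{-1}},
\]
which, after inserting the harmless truncation $1\wedge(\cdot)$ (each per‑coordinate error is also trivially bounded), is dominated by $b_t(s,a)$ in \eqref{eq:bonus}.

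Third comes the regret decomposition. On the good events, optimism gives $\Delta_t = V^*(s_t) - V_t^\pi(s_t) \le Q_t(s_t,a_t) - V_t^\pi(s_t)$ up to negligible slack. Writing the one‑step identity for $Q_t$ under $\widehat{\gT}_t$ and subtracting the one for $V_t^\pi$ under the true model, the model‑error term is again swallowed by the bonus, leaving
\[
  Q_t(s_t,a_t) - V_t^\pi(s_t) \le 2\,b_t(s_t,a_t) + \gamma\,\EE\!\bigl[Q_{t+1}(s_{t+1},a_{t+1}) - V_{t+1}^\pi(s_{t+1})\mid \gF_t\bigr] + \xi_t,
\]
with $\xi_t$ a bounded martingale difference, plus a correction at the $\gO(d\log T)$ switching steps where $Q_t \neq Q_{t+1}$; since $\det(\Sigma)$ at most doubles between switches, $\|\phi\|_{\Sigma^{-1}}$ — and hence the bonus — changes by at most a constant, so using the most‑recent switch's bonus costs only a constant and each of the $\gO(d\log T)$ jumps adds at most $\Lambda/(1-\gamma)$, a $\mathrm{polylog}(T)$ contribution. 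Unrolling the recursion and summing over $t$ yields $\reg(T) \le \frac{C'}{1-\gamma}\sum_{t=1}^T b_t(s_t,a_t) + (\text{martingale} + \text{switching terms})$ for an absolute constant $C'$. Finally, using $b_t(s_t,a_t) \le \frac{2\gamma\Lambda}{1-\gamma}\sum_{i,k}\bigl(1\wedge\beta_T\|\phi^t_{i,k}\|_{\Sigma_{t-1}^{-1}}\bigr)$, Cauchy–Schwarz over the $NKT$ summands, and the elliptical‑potential (log‑determinant) lemma to bound $\sum_{t}\sum_{i,k}\bigl(1\wedge\|\phi^t_{i,k}\|_{\Sigma_{t-1}^{-1}}^2\bigr) \le 2d\log(1+NKL^2T/(d\lambda)) = \widetilde{\gO}(d)$, I get $\sum_t b_t(s_t,a_t) = \widetilde{\gO}\bigl(\frac{1}{\Delta^2(1-\gamma)}(d\sqrt{C}/\Delta + \sqrt{dNK})\sqrt{T}\bigr)$; the extra $1/(1-\gamma)$ from the unrolling produces the stated $1/(\Delta^2(1-\gamma)^2)$ prefactor.

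I expect the main obstacle to be event (ii): the a priori worst case is $\|s_t\|_1 = \Theta(NK)$, and reducing it to a bound independent of $N$ and $K$ — only $\mathrm{polylog}(NKT)/\Delta^2$ — is precisely what enables the $\sqrt{NK}$ regret, and it requires carefully tracking the geometric decay of each seed's descendant set together with a Freedman‑type concentration with the right variance proxy, simultaneously over all $T$ rounds and over an unbounded (or infinite) population of users. A closely related difficulty is that the value and Q‑functions are genuinely \emph{nonlinear} in the state (the transition is a product over coordinates), so off‑the‑shelf linear/kernel‑MDP optimism arguments do not apply and the coordinatewise bound on $|(\PP_{\widehat{\gT}} - \PP_{\gT^*})V|$ must be built from the factored structure rather than quoted.
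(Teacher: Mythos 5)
Your proposal follows essentially the same route as the paper: the same three good events (a batched Bernstein-type self-normalized bound yielding the variance-aware $\beta_t$, a decay-driven high-probability bound $\|s_t\|_1 \le \widetilde{\gO}(1/\Delta^2)$ obtained by Bernstein concentration plus a recursion, and an Azuma/Freedman martingale term), the same factored-transition $L_1$ bound to justify optimism of the truncated value iteration, the same surrogate-regret reduction, and the same elliptical-potential plus $\det$-doubling accounting for the slow-switching schedule. The argument is correct and matches the paper's proof in all essential respects.
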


We see that the dominant term of the regret is $\widetilde{\gO} \Big( \frac{1}{\Delta ^2(1-\gamma)^2} \cdot \big(d\sqrt{C }/\Delta + \sqrt{dNK} \big) \cdot \sqrt{T}  \Big)$. Notice that the worst-case reward would scale with $NK$, while we managed to reduce the scaling of the regret to $1/\Delta^2$. 

Next, we provide a proof sketch and defer the complete proof to Appendix~\ref{append:proof}.

\begin{proof}[Proof sketch of Theorem~\ref{thm:main}]

We highlight several key components of the proof.

\textbf{High probability upper bounds for the size of active user-content pairs.} We utilize the diffusion decay assumption (Assumption~\ref{decay}) to provide a high probability upper bound on the number of active user-content pairs. We show that for any policy $\pi$, with probability at least $1-p$, we have for all $t\geq 1$,
\begin{equation}
    \|s_t\|_1 \leq   {\gO}(\frac{1}{\Delta^2}\log\frac{4t^2}{p}). \label{eq:uc-bound}
\end{equation}
We see that although we have in total $NK$ user-content pairs, the number of active ones is constrained by a constant intrinsic to the network diffusion dynamics.

\textbf{Sharper bounds for the confidence region.} We derive a batched version of Bernstein-type self-normalized bounds from \citep{zhou2021provably} and show that with high probability for all $t$, $\| \widehat{\gT}_t - \gT^* \|_{\Sigma_{t-1}} \leq \beta_t$, where $\beta_t$ can be chosen as $\widetilde{\gO}(\sigma\sqrt{d} + 1)$ and $\sigma^2$ is the upper bound of $\mathrm{var}[(s_{t+1})_{i,k}|s_t,a_t]$. Combing Eqn.~\eqref{eq:uc-bound} and Assumption~\ref{assump:utpub}, we have 
\begin{align*}
    \mathrm{var}[(s_{t+1})_{i,k}|s_t,a_t] \leq \EE[(s_{t+1})_{i,k}|s_t,a_t] = \sum_j \Ab^k_{i,j}(s_{ta_t})_{j,k}  \leq \frac{C}{NK}(\|s_t\|_1 + 1) \leq \widetilde{\gO}(\frac{C}{NK\Delta^2}).
\end{align*}
Then $\beta_t = \widetilde{\gO}(\sqrt{\frac{dC}{NK}}/\Delta+ 1)$, which improves upon $\beta_t = \widetilde{\gO}(\sqrt{d})$ given by the sub-Gaussian type self-normalized bounds.

\textbf{Surrogate regret of the truncated-reward model.} Since we essentially run our algorithm against the truncated-reward model, we define the surrogate regret as $\widetilde{\reg}(T) = \sum_{t=1}^T(\widetilde{V}^*(s_t) - \widetilde{V}^\pi_t(s_t))$, where $\widetilde{V}^*$ and $\widetilde{V}^\pi_t$ are computed using the truncated reward $\widetilde{r}(s,a) = \min\{r(s,a), \Lambda\}$. By Eqn.~\eqref{eq:uc-bound}, with probability at least $1-1/(2NKT)$, under any policy, we have  $r(s_t,a_t) \leq \|s_t\|_1 \leq \Lambda = \widetilde{\gO}(1/\Delta^2)$ for all $t\leq T$. This means with high probability we have $r(s_t,a_t) = \widetilde r(s_t,a_t)$ and hence the true regret and the surrogate regret is similar. Specifically, we will show  $\reg(T) \leq \widetilde{\reg}(T) + 1/(1-\gamma)$.

\textbf{Bonus term.} Let $\PP(s'|s,a)$ be the true transition probability and $\widehat \PP (s'|s,a)$ be the empirical estimate. As a typical result in MDP theory, we require  $b(s,a) \geq \gamma \|V\|_\infty \cdot \| \PP(\cdot|s,a) - \widehat \PP(\cdot|s,a) \|_1$ to ensure optimism. We exploit the fact that $\PP(s'|s,a)$ and $\widehat \PP(\cdot|s,a) $ is factorized, i.e.,  $\displaystyle \PP(\cdot|s,a) = \otimes _{i=1}^N\otimes _{k=1}^K \PP_{i,k}(\cdot|s,a)$ and $\widehat \PP(\cdot|s,a) = \otimes _{i=1}^N\otimes _{k=1}^K \widehat\PP_{i,k}(\cdot|s,a)$, which stem from the independence assumption (Assumption~\ref{bernoulli}). This gives us 
$
    \| \PP(\cdot|s,a) - \widehat \PP(\cdot|s,a) \|_1 \leq  \sum _{i=1}^N \sum_{k=1}^K \| \PP_{i,k}(\cdot|s,a) - \widehat \PP_{i,k}(\cdot|s,a) \|_1.
$
Notice that $\PP_{i,k}(\cdot|s,a)$ is a Bernoulli distribution, then by Assumption~\ref{assump:2} we have $ \| \PP_{i,k}(\cdot |s,a) - \widehat\PP_{i,k}(\cdot |s,a) \|_1 \leq 2(1 \wedge |\langle \gT^* - \widehat{\gT} , \phi_{i,k}(s,a) \rangle |) $. Therefore, the bonus term can be chosen as Eqn.~\eqref{eq:bonus} and we ensure optimism at each time.

\textbf{Regret decomposition.} We have the following regret decomposition for the surrogate regret.
\begin{align*}
    \widetilde{\reg}(T) \leq  \gO\Big\{\frac{1}{1-\gamma}\Big[  \sum_{t=1}^T b_{t_s}(s_t,a_t) +  \frac{2\gamma\Lambda}{1-\gamma} \sqrt{T\log\frac{1}{\delta}} + \Big( \frac{\Lambda}{1-\gamma} M \Big) \Big] \Big\},
\end{align*}
where $M = M(T)$ is the total number of switches and we will show that $M  = \widetilde{\gO}(d)$. Then the dominant term of the regret is
\begin{align*}
    \frac{1}{1-\gamma} \sum_{t=1}^T b_{t_s}(s_t,a_t) &=  \widetilde{\gO}\Big (\frac{1}{(1-\gamma)^2\Delta^2} \Big) \cdot \beta_T \cdot \sum_{t=1}^T \sum_{i=1}^N \sum_{k=1}^K (1 \wedge  \| \phi_{i,k} (s_t,a_t) \|_{\Sigma_{{t_s}-1}^{-1}} )\\
    &\leq \widetilde{\gO}\Big (\frac{1}{(1-\gamma)^2\Delta^2} \Big) \cdot \beta_T \cdot  \widetilde{\gO}(\sqrt{dNKT} + MNK).
\end{align*}
where $t_s$ denotes the last switch up to time $t$, and the last inequality follows from a variant of Elliptical Potential Lemma. Plug in the choice of $\beta_T$ and we derive the result.

\end{proof}

\section{Experiments}

\label{sec-experiment}

\paragraph{Data description.}

We run benchmark experiments on synthetic data to evaluate the performance of our Algorithm \ref{algo:1}. We construct a synthetic directed graph consisting of $N=300$ nodes where each node corresponds to an user. We construct 6-dim user features for each node, $x_i \in \RR^{d_1}, d_1=6$. Next, we make content features $\theta_k \in [0,1]^{d_2}, d_2=3, ||\theta_k||_{\infty}=1$ for $K=4$ contents. Specifically, each orthogonal content feature direction corresponds to $100$ users, i.e. these user can be activated by only one associated content dimension. Also we have 3 types of users have different influencing power of high, mediate and low. The users with high-level influencing power lead to two-step delayed reward, while the mediate users lead to good immediate reward. Dynamic planning would be key for such a setting. Finally, We use a fixed tensor $\gT^* \in \RR^{d_1d_1d_2}$ as the underlying dynamic to induce transition matrices $\Ab^k, k \in [K]$ by Eqn.~\eqref{eq:user-content-transition}.

The state space $S$ of the reinforcement learning problem has size $|\gS|=2^{NK}$; a state of the graph represents active/inactive status of all (user, content) pairs, i.e. $s \in \{0,1\}^{N \times K}, NK=1200$. The action space consists of all $1200$ user-content pairs, i.e. an action $a \in [N] \times [K]$. At time step $t$, the state of the system will update according to Eqn.~\eqref{eq:user-content-transition}.

\medskip

\paragraph{Implementation and Baselines.}

Exactly solving for the optimal policy, even if the network is fully known, requires solving a combinatorially hard planning problem. In our experiment, we adopt a two-step lookahead scheme for approximate dynamic programming in \textsc{Morima}. The parameters in Algorithms \ref{algo:1} and \ref{algo:vit} are set as $\gamma = 0.9, \lambda = 1$.

We compare \textsc{Morima} algorithm with the following baselines.
\begin{itemize}
    \item Random policy, which uniformly selects a user-content pair to activate. 
    
    \item IMLinUCB \cite{wen2017online}. A combinatorial linear bandits baseline that was originally designed for non-adaptive online IM. By setting budget as $b=2$, we run the algorithm every $b$ rounds and play the selected $b$ actions spontaneously.
    
    \item \textsc{Morima} without slow switching. We force the Q-function to be updated at each time step.
    
    \item \textsc{Morima} with one-step lookahead. Here we take the average immediate reward based on estimated $\widehat{\Ab}$ as the value of Q-function.
    
     \item \textsc{Morima} with known $\Ab^k$s. We input ground-truth connectivity matrices $\Ab^k$s as input for two-step lookahead planning. We consider this as the performance upper bound of our algorithm with the same approximate planning oracle.

\end{itemize}

\paragraph{Results and analysis.}

We report the averaged discounted sum of rewards and its variance of total 20 runs in Figure \ref{fig-reward}. The discounted sum of reward of \textsc{Morima} reaches the same level of the performance upper bound with true $\Ab$ in less than 100 rounds, showing that our algorithm can quickly explore the unknown network and learn to make optimal decisions. We also notice that our algorithm outperforms IMLinUCB because it can adaptively make decisions based on current state while IMLinUCB makes static decisions. We can observe that deeper lookahead benefits the \textsc{Morima} performance; the reward by \textsc{Morima} with only one-step lookahead is much lower suggesting that a good approximated online planning oracle is preferred.

\begin{figure}
\centering
\includegraphics[width=.99\textwidth]{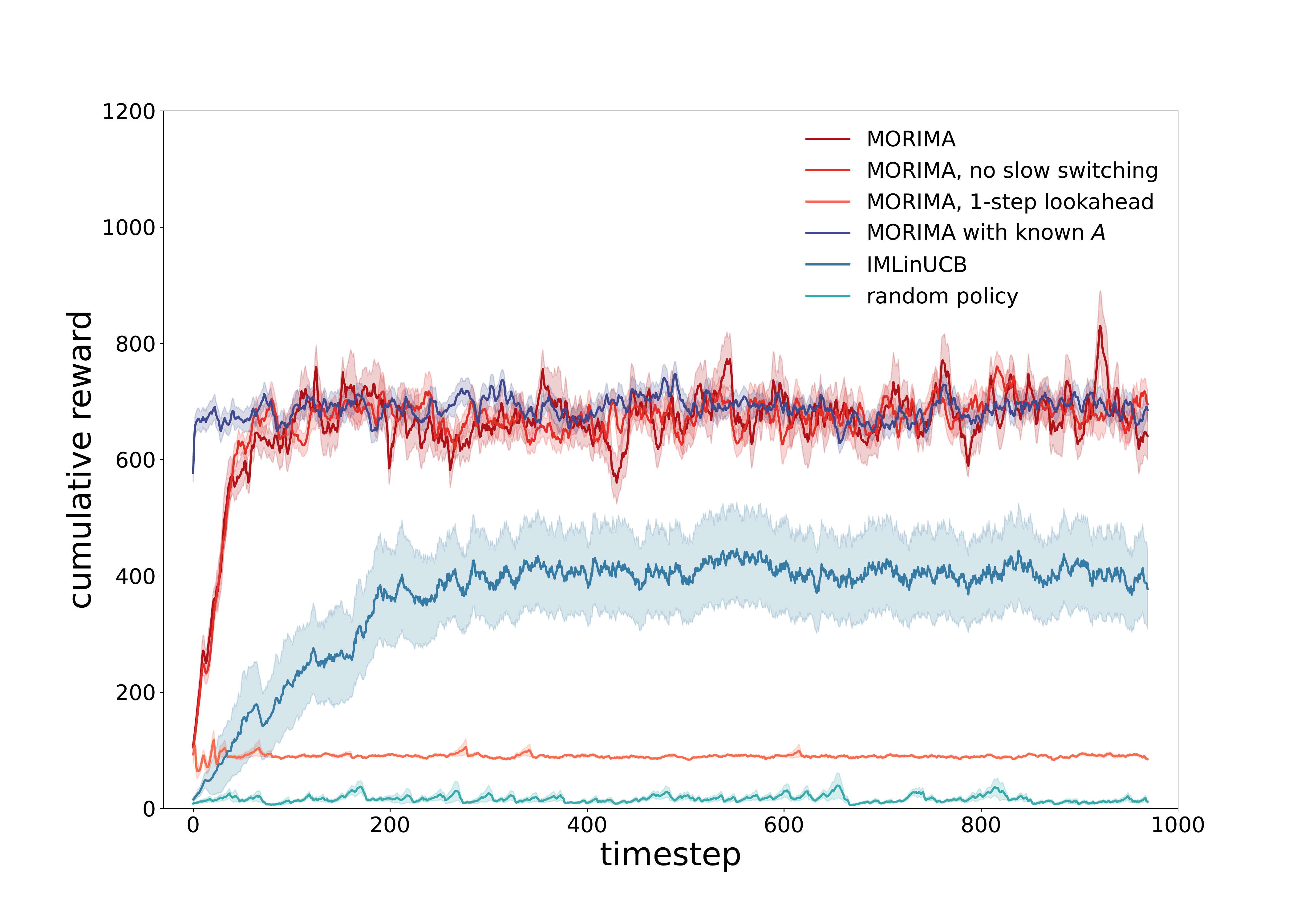}
\vspace{-2mm}
\caption{\textbf{Real-time discounted sum of rewards.} Synthetic network with static underlying dynamics $\gT^*$ generating edge-level transition probability. } 
\centering
\label{fig-reward}
\end{figure}

\section{Conclusion} \label{sec:conclusion}
In this paper, we study the problem of content-dependent online adaptive influence maximization and formulate the problem as an infinite-horizon discount MDP. We propose \textsc{Morima}, a model-based reinforcement learning algorithm that learns optimal policy from node-level feedback under IC model. We provide a $\widetilde \gO( d\sqrt{T}/\Delta+ \sqrt{dNKT})$ regret bound for our algorithm, which is the first sublinear regret bound of online adaptive influence maximization problem, and empirically validated the effectiveness of our algorithm. As of the future works, it is interesting to investigate other diffusion models such as linear threshold model or diffusion-independent setting.

\bibliographystyle{plainnat}
\bibliography{reference}


\clearpage
\appendix


\section{A Sharper bound for the Confidence Region}\label{sec:aa} 
\subsection{Main Lemma}\label{sec:B1}
\begin{lem}[Confidence Region]\label{lem:conf}
 Let Assumptions~\ref{bernoulli}-\ref{decay} hold. With probability at least $1-\delta$, we have for all $t \geq 1$, 
\[
    \| \widehat{\gT}_t - \gT^* \|_{\Sigma_{t-1}} \leq \beta_t,
\]
where  
\[
    \beta_t =  \Big(\frac{24}{\Delta} \sqrt{ {C}/{(NK)} \cdot d \cdot  \log(1+NK L^2 t/(d\lambda))} + 4 \Big) \log(8N^2K^2t^2/\delta) +  \sqrt{\lambda}\|\gT^*\|_2. 
\]
and $L = \sup \|\phi_{i,k}^t\|_2$, $d = d_1^2d_2$.
\end{lem}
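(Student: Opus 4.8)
## Proof Proposal for Lemma~\ref{lem:conf}

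The plan is to follow the classical self-normalized analysis of ridge regression, upgraded in two ways: a Bernstein-type (variance-sensitive) self-normalized inequality, and the observation — powered by the diffusion-decay assumption — that the relevant noise variance is of order $C/(NK\Delta^2)$ rather than the trivial $O(1)$. This is what turns the generic $\beta_t=\widetilde{\gO}(\sqrt{d})$ into the sharper $\beta_t=\widetilde{\gO}\big(\tfrac{1}{\Delta}\sqrt{dC/(NK)}+1\big)$.

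First I would decompose the estimation error. Conditioned on $(s_\tau,a_\tau)$ we have $(s_{\tau+1})_{i,k}=\langle\gT^*,\phi^\tau_{i,k}\rangle+\eta^\tau_{i,k}$ with $\eta^\tau_{i,k}:=(s_{\tau+1})_{i,k}-\E[(s_{\tau+1})_{i,k}\mid s_\tau,a_\tau]$ and $|\eta^\tau_{i,k}|\le 1$. Because Assumption~\ref{bernoulli} makes the coordinates $\{(s_{\tau+1})_{i,k}\}_{i,k}$ conditionally independent given $(s_\tau,a_\tau)$, I can order the triples $(\tau,i,k)$ lexicographically (time outermost) into a single scalar martingale difference sequence: taking $\mathcal{F}_{(\tau,i,k)}$ to be the history up to $(s_\tau,a_\tau)$ together with the within-step outcomes preceding $(i,k)$, one still has $\E[\eta^\tau_{i,k}\mid\mathcal{F}_{(\tau,i,k)}]=0$ and $\phi^\tau_{i,k}$ is $\mathcal{F}_{(\tau,i,k)}$-measurable. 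Using $\widehat{\gT}_t=\Sigma_{t-1}^{-1}B_{t-1}$ and $\Sigma_{t-1}\gT^*=\sum_{\tau,i,k}\phi^\tau_{i,k}\langle\gT^*,\phi^\tau_{i,k}\rangle+\lambda\gT^*$ then gives the standard split $\|\widehat{\gT}_t-\gT^*\|_{\Sigma_{t-1}}\le\big\|\sum_{\tau,i,k}\phi^\tau_{i,k}\eta^\tau_{i,k}\big\|_{\Sigma_{t-1}^{-1}}+\sqrt{\lambda}\,\|\gT^*\|_2$.

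Next I would control the noise variance. The high-probability bound on the number of active pairs, Eqn.~\eqref{eq:uc-bound} (a consequence of Assumption~\ref{decay}), gives an event $\mathcal{E}$ of probability $\ge 1-\delta/2$ on which $\|s_\tau\|_1=\widetilde{\gO}(1/\Delta^2)$ for all $\tau$. On $\mathcal{E}$, Assumptions~\ref{assump:2} and~\ref{assump:utpub} yield $\mathrm{var}[(s_{\tau+1})_{i,k}\mid s_\tau,a_\tau]\le\E[(s_{\tau+1})_{i,k}\mid s_\tau,a_\tau]=\sum_j\Ab^k_{i,j}(s_{\tau a_\tau})_{j,k}\le\frac{C}{NK}(\|s_\tau\|_1+1)=\widetilde{\gO}(C/(NK\Delta^2))=:\sigma^2$. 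Since the self-normalized inequality below is an anytime statement, I cannot condition on $\mathcal{E}$ directly; instead I would truncate the noise at the stopping time $\tau^\star:=\min\{\tau:\|s_\tau\|_1>\widetilde{\gO}(1/\Delta^2)\}$, replacing $\eta^\tau_{i,k}$ by $\eta^\tau_{i,k}\mathbf{1}\{\tau<\tau^\star\}$, so that the variance bound $\sigma^2$ holds unconditionally while the ridge estimate is unchanged on $\mathcal{E}$. Then I would invoke a batched Bernstein-type self-normalized bound (a variant of the one in~\citep{zhou2021provably}) for $\{\phi^\tau_{i,k}\eta^\tau_{i,k}\}$ with variance proxy $\sigma^2$, almost-sure range $R=1$, feature bound $L=\sup\|\phi^t_{i,k}\|_2$, dimension $d=d_1^2d_2$, and a union bound over the $O(NKt)$ positions up to step $t-1$, obtaining with probability $\ge 1-\delta/2$, simultaneously for all $t$, $\big\|\sum_{\tau,i,k}\phi^\tau_{i,k}\eta^\tau_{i,k}\big\|_{\Sigma_{t-1}^{-1}}\le\big(\tfrac{24}{\Delta}\sqrt{C/(NK)\cdot d\cdot\log(1+NKL^2t/(d\lambda))}+4\big)\log(8N^2K^2t^2/\delta)$, where the first summand is the variance term ($\sigma\sqrt{d\log(\cdot)}$ up to the explicit constant) and the ``$4$'' is the range term. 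Intersecting with $\mathcal{E}$ and adding $\sqrt{\lambda}\|\gT^*\|_2$ gives the lemma.

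The main obstacle is the interplay in the previous paragraph: the sharp variance bound is valid only on the good event $\mathcal{E}$, whose slack grows (poly)logarithmically in $t$, yet it must feed into an anytime concentration inequality. The stopping-time truncation resolves this, but one must check that the $\log$-in-$t$ growth of the $\|s_\tau\|_1$ bound is exactly what gets absorbed into $\widetilde{\gO}$ inside $\sigma^2$, and that truncation does not disturb the predictability structure. A second, more technical point is re-deriving the Bernstein self-normalized inequality in ``batched'' form, so that the $NK$ conditionally-independent coordinates emitted at each time step contribute a single per-coordinate variance proxy $\sigma^2$ rather than an inflated $NK\sigma^2$ — this is precisely where Assumption~\ref{bernoulli} is used, both to form the scalar martingale ordering and to keep the summed conditional covariance bounded by $\sigma^2(\Sigma_{t-1}-\lambda I)$.
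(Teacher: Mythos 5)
Your proposal follows essentially the same route as the paper: a batched Bernstein-type self-normalized bound applied to the lexicographically serialized martingale difference sequence $\{\phi^\tau_{i,k}\eta^\tau_{i,k}\}$ (exactly the filtration $\gG_{t,i}$ construction used to prove Lemma~\ref{lem:self-normed}), with the variance proxy $\sigma^2=\widetilde{\gO}(C/(NK\Delta^2))$ obtained from the high-probability bound on $\|s_\tau\|_1$ in Lemma~\ref{lem:hpbound:user}, and the final bound assembled by a union over the two events plus the $\sqrt{\lambda}\|\gT^*\|_2$ ridge term. Your explicit stopping-time truncation to reconcile the event-conditional, $t$-dependent variance bound with the anytime self-normalized inequality is a point the paper handles only informally (it plugs the event-restricted $\sigma^2$ directly into Lemma~\ref{lem:self-normed}), so your version is, if anything, slightly more careful on that step.
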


Before the proof of Lemma~\ref{lem:conf}, we introduce two lemmas below:

\begin{lem}[High probability bounds for the number of active user-content pairs]\label{lem:hpbound:user}
    Let Assumptions~\ref{bernoulli}-\ref{decay} hold. For any possibly non-stationary policy $\pi$, with probability at least $1-\delta$, we have for all $t\geq 1$, 
    \[
        \|s_t\|_1 \leq \frac{2}{\Delta}(\frac{2}{\Delta}\log \frac{2 t^2}{ \delta} + 1).
    \]
\end{lem}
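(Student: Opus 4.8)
The plan is to control $X_t:=\|s_t\|_1$ through a self-referential bound on its moment generating function, exploiting that the diffusion decay (Assumption~\ref{decay}) produces a strong negative drift whenever $X_t$ is large. First I would record the one-step drift, valid for \emph{any} (possibly history-dependent) policy: since $\|s_{t a_t}\|_1 = X_t + 1$ and $\sum_i \Ab^k_{i,j}\le 1-\Delta$, Assumption~\ref{bernoulli} gives
\[
    \E\big[X_{t+1}\mid \mathcal F_t\big] = \sum_{i,k}\sum_j \Ab^k_{i,j}(s_{t a_t})_{j,k} \le (1-\Delta)(X_t+1).
\]
Because the coordinates $(s_{t+1})_{i,k}$ are conditionally independent Bernoulli variables given $\mathcal F_t$, the MGF factorizes and $1+u\le e^u$ yields, for every $\lambda>0$,
\[
    \E\big[e^{\lambda X_{t+1}}\mid \mathcal F_t\big] \le \exp\!\big((e^\lambda-1)\,\E[X_{t+1}\mid\mathcal F_t]\big) \le \exp\!\big(\alpha(X_t+1)\big),\qquad \alpha:=(e^\lambda-1)(1-\Delta).
\]

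The crucial step is to choose $\lambda=\Delta/2$, so that $e^x-1\le x+x^2$ on $[0,\tfrac12]$ gives $\alpha\le\lambda-\Delta^2/4$, hence $\rho:=\alpha/\lambda\le 1-\Delta/2<1$ and $\alpha/(1-\rho)\le 1$. Setting $u_t:=\E[e^{\lambda X_t}]$, taking expectations above and then applying Jensen's inequality to the concave map $x\mapsto x^{\rho}$ to bound $\E[e^{\alpha X_t}]$ by $u_t^{\rho}$ produces the recursion $u_{t+1}\le e^{\alpha}u_t^{\rho}$. Starting from $u_1=e^{\lambda\|s_1\|_1}$ — equal to $1$ for the empty initial network — a short induction on $\log u_t$ gives $\log u_{t+1}\le\alpha(1+\rho+\cdots)\le \alpha/(1-\rho)\le 1$, i.e. $\E[e^{\lambda X_t}]\le e$ for all $t$.

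I would then conclude by Markov's inequality plus a union bound over $t\ge1$: $\PP[X_t\ge h_t]\le e^{1-\lambda h_t}$, and the choice $h_t=\lambda^{-1}\big(1+\log(2t^2/\delta)\big)=\tfrac{2}{\Delta}\big(1+\log(2t^2/\delta)\big)$ makes $\sum_{t\ge1}\PP[X_t\ge h_t]\le\sum_{t\ge1}\tfrac{\delta}{2t^2}<\delta$; since $1/\Delta\ge1$ this $h_t$ is dominated by $\tfrac{2}{\Delta}\big(\tfrac{2}{\Delta}\log\tfrac{2t^2}{\delta}+1\big)$, which is the stated bound. The main obstacle is the second paragraph: a naive exponential supermartingale in $e^{\lambda X_t}$ (or $e^{\lambda X_t}$ divided by a deterministic sequence) either grows geometrically or only gives a useless linear-in-$t$ bound, and the fix is to capture the mean reversion through the contraction $u_{t+1}\le e^{\alpha}u_t^{\rho}$ with $\rho<1$ and to pin down the constants so that its fixed point $e^{\alpha/(1-\rho)}$ is an absolute constant. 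A slightly different route — stopping time $\tau=\min\{t:X_t>h\}$ together with a Bernstein bound for the Bernoulli sum on the event $\{X_t\le h\}$ — also works and leads more directly to the $\Theta(\Delta^{-2}\log(t^2/\delta))$ form appearing in the statement, at the price of more bookkeeping of constants.
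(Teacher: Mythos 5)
Your proof is correct, and it takes a genuinely different route from the paper's. The paper argues pathwise: conditioned on $(s_t,a_t)$, $\|s_{t+1}\|_1$ is a sum of $NK$ independent Bernoullis, so a one-step Bernstein inequality (with the Bernoulli variance bounded by the mean) plus AM--GM gives $\|s_{t+1}\|_1 \le (1-\Delta/2)(\|s_t\|_1+1) + \tfrac{2}{\Delta}\log\tfrac{1}{\delta_t}$ with probability $1-\delta_t$; a union bound with $\delta_t=\delta/(2t^2)$ and solving this stochastic recursion yields the stated bound, with the $\Delta^{-2}$ arising because the additive noise $\tfrac{2}{\Delta}\log\tfrac{2t^2}{\delta}$ is amplified by the $\tfrac{2}{\Delta}$ relaxation time of the contraction. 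You instead control the unconditional moment generating function: the Poisson-type bound $\E[e^{\lambda X_{t+1}}\mid\gF_t]\le\exp((e^\lambda-1)\E[X_{t+1}\mid\gF_t])$ combined with the same drift inequality, the choice $\lambda=\Delta/2$, and Jensen applied to $x\mapsto x^{\rho}$ give the contraction $u_{t+1}\le e^{\alpha}u_t^{\rho}$ with $\rho\le 1-\Delta/2$ and fixed point $e^{\alpha/(1-\rho)}\le e$; Markov plus a union bound then finish. I checked the constants ($e^x-1\le x+x^2$ on $[0,\tfrac12]$ gives $\alpha\le\lambda-\Delta^2/4$, hence $\alpha/(1-\rho)\le1$) and they work. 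Your route is in fact sharper: it yields $\|s_t\|_1\le\tfrac{2}{\Delta}(1+\log\tfrac{2t^2}{\delta})$, improving the paper's $\Delta^{-2}\log$ term to $\Delta^{-1}\log$, and you correctly observe this is dominated by the stated bound. Both arguments share the same implicit requirement that the initial state be (essentially) empty — the paper needs a base case for its recursion, you need $u_1$ bounded — so that is not a point of difference. What the paper's approach buys is elementarity (Bernstein plus a deterministic recursion, no MGF manipulations); what yours buys is a uniform-in-$t$ exponential moment bound and a better $\Delta$-dependence, which would propagate to a slightly smaller $\Lambda$ and $\sigma^2$ downstream if one cared to track it.
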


\begin{lem}[Bernstein-type self-normalized bound, batched version \citep{zhou2021nearly}]\label{lem:self-normed}
    Let $\{\gF_t\}_{t=1}^\infty$ be a filtration, $\{x_t^i,y_t^i\}_{t\geq 1, 1\leq i\leq m}$ be a stochastic process such that $x_t^i \in \RR^d$ is $\gF_t$-measurable and $y_t^i \in \RR$ is $\gF_{t+1}$-measurable. Assume that conditioned on $\gF_t$, $\{y_t^1,\cdots,y_t^m\}$ are independent, and 
    \[
        |y_t^i| \leq R, \quad \EE[y_t^i|\gF_t] = \langle \gT^*, x_t^i \rangle,\quad \mathrm{var}[y_t^i|\gF_t] \leq \sigma^2,\quad \|x_t^i\|_2 \leq L,
    \]
    then with probability at least $1-\delta$, the following holds simultaneously for all $t\geq 1$: 
    \[
        \| \widehat{\gT}_t - \gT^* \|_{\Sigma_{t-1}} \leq \beta_t , \quad \Big \| \sum_{i=1}^m\sum_{\tau=1}^{t-1} x_\tau^i (y_\tau^i - \langle \gT^*, x_\tau^i \rangle) \Big  \|_{\Sigma_{t-1}^{-1}} \leq \beta_t - \sqrt{\lambda}\|\gT^*\|_2.
    \]
    where $\widehat{\gT}_t = \Sigma_{t-1}^{-1} B_{t-1}$, $\Sigma_{t-1} = \lambda I + \sum_{i=1}^m \sum_{\tau=1}^{t-1} x_\tau^i (x_\tau^i)^\top$,  $B_{t-1} = \ \sum_{i=1}^m \sum_{\tau=1}^{t-1} x_\tau^i y_\tau^i$, and 
    \[
        \beta_t = 8\sigma \sqrt{d\log(1+mtL^2/(d\lambda))\log(4m^2t^2/\delta)} + 4R\log(4m^2t^2/\delta) + \sqrt{\lambda}\|\gT^*\|_2.
    \]
\end{lem}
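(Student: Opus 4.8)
The plan is to deduce the batched bound from the single-stream Bernstein self-normalized inequality of \citep{zhou2021nearly} by an exact filtration refinement, after first reducing the confidence-set statement to the control of one self-normalized sum. Write $\eta_\tau^i = y_\tau^i - \langle\gT^*, x_\tau^i\rangle$ and $S_t = \sum_{\tau=1}^{t-1}\sum_{i=1}^m x_\tau^i\eta_\tau^i$. Since $B_{t-1} = (\Sigma_{t-1}-\lambda I)\gT^* + S_t$, one has $\widehat{\gT}_t - \gT^* = \Sigma_{t-1}^{-1}(S_t - \lambda\gT^*)$, hence
\[
\|\widehat{\gT}_t - \gT^*\|_{\Sigma_{t-1}} \le \|S_t\|_{\Sigma_{t-1}^{-1}} + \lambda\|\gT^*\|_{\Sigma_{t-1}^{-1}} \le \|S_t\|_{\Sigma_{t-1}^{-1}} + \sqrt{\lambda}\|\gT^*\|_2,
\]
using $\Sigma_{t-1}^{-1} \preceq \lambda^{-1} I$. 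Thus both displayed inequalities follow at once as soon as $\|S_t\|_{\Sigma_{t-1}^{-1}} \le \beta_t - \sqrt{\lambda}\|\gT^*\|_2$ holds for all $t$ simultaneously, and this self-normalized sum is the only object to bound.

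Next I would flatten the batched process into a single adapted sequence so the cited single-stream result applies verbatim. Define the refined filtration $\gG_{t,i} = \sigma\big(\gF_t \cup \{y_t^1,\dots,y_t^{i-1}\}\big)$, revealing the $m$ observations of batch $t$ one at a time. Because $x_t^i$ is $\gF_t$-measurable it is $\gG_{t,i}$-measurable, and because $\{y_t^1,\dots,y_t^m\}$ are conditionally independent given $\gF_t$, conditioning on $\gG_{t,i}$ does not alter the relevant moments: $\EE[y_t^i|\gG_{t,i}] = \EE[y_t^i|\gF_t] = \langle\gT^*, x_t^i\rangle$ and $\mathrm{var}[y_t^i|\gG_{t,i}] = \mathrm{var}[y_t^i|\gF_t] \le \sigma^2$, while $|y_t^i|\le R$ and $\|x_t^i\|_2\le L$ are preserved. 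Ordering pairs lexicographically by $(\tau,i)$ therefore yields a genuine single-stream process satisfying the same boundedness, conditional-mean, and conditional-variance hypotheses, whose running Gram matrix and self-normalized sum at the end of batch $t-1$ coincide exactly with $\Sigma_{t-1}$ and $S_t$. The within-batch conditional independence is precisely what lets the per-element variances add without cross terms.

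I would then invoke the single-stream Bernstein self-normalized bound on this flattened sequence, whose length up to the end of batch $t-1$ is $m(t-1)\le mt$. Substituting $n\leftarrow mt$ reproduces the stated $\beta_t$: the determinant factor obeys $\log\big(\det\Sigma_{t-1}/\det(\lambda I)\big) \le d\log(1 + mtL^2/(d\lambda))$ by the standard log-determinant estimate under $\|x_\tau^i\|_2\le L$, producing the $8\sigma\sqrt{d\log(1+mtL^2/(d\lambda))\,\log(4m^2t^2/\delta)}$ term; the sub-exponential correction contributes $4R\log(4m^2t^2/\delta)$; and the factor $\log(4m^2t^2/\delta)=\log(4n^2/\delta)$ is the uniform-in-time union cost. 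Restricting the ``for all $n$'' guarantee to the batch boundaries $n=m(t-1)$ gives the ``for all $t\ge 1$'' statement.

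The technical heart—should one prefer a self-contained derivation rather than citing \citep{zhou2021nearly}—lies in the single-stream Bernstein step, and this is the step I expect to be the main obstacle. For fixed $\theta$ one builds the exponential supermartingale $M_t(\theta) = \exp\big(\langle\theta, S_t\rangle - c\,\sigma^2\|\theta\|_{\Sigma_{t-1}-\lambda I}^2\big)$ from the Bernstein MGF estimate $\log\EE[e^{\langle\theta,x_\tau^i\rangle\eta_\tau^i}\mid\gG_{\tau,i}] \le c\,\sigma^2\langle\theta,x_\tau^i\rangle^2$, which is valid only on the bounded region $|\langle\theta,x_\tau^i\rangle|\le 1/R$. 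This restriction blocks a direct Gaussian method-of-mixtures integration over all of $\RR^d$; instead one combines the mixture/pseudo-maximization argument with a covering (peeling) over the scale of $\theta$ and a union bound, which is exactly what yields the two-term, variance-adaptive radius rather than the cruder $R\sqrt{d}$ scaling. Verifying the supermartingale property through the refined filtration and controlling the covering cost are where care is required; the remaining steps are the routine algebra indicated above.
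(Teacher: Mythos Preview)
Your proposal is correct and follows essentially the same approach as the paper: the paper also serializes the batched process via the refined filtration $\gG_{t,i}=\sigma(\gF_t,y_t^1,\dots,y_t^{i-1})$, uses conditional independence to conclude $y_t^i\mid\gG_{t,i}\,=_d\,y_t^i\mid\gF_t$, applies Theorem~4.1 of \citet{zhou2021nearly} to the flattened sequence, and then restricts to the batch boundary $i=m$. Your additional explicit reduction from the confidence-set bound to the self-normalized sum, and the sketch of a self-contained supermartingale/mixture argument, go beyond what the paper writes but are consistent with it.
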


\begin{proof}[proof of Lemma~\ref{lem:conf}]

We use Lemma~\ref{lem:self-normed} for batched stochastic process $\{\phi_{i,k}^t, (s_{t+1})_{i,k}\}$. Notice that we can choose $\sigma^2$ to be the upper bound of $\mathrm{var}[(s_{t+1})_{i,k}|s_t,a_t]$, and 
\[
    \mathrm{var}[(s_{t+1})_{i,k}|s_t,a_t] \leq \EE[(s_{t+1})_{i,k}|s_t,a_t] = \sum_j \Ab^k_{i,j}(s_{ta_t})_{j,k} \leq \frac{C}{NK} \sum_j (s_{ta_t})_{j,k} \leq \frac{C}{NK}(\|s_t\|_1 + 1).
\]
where we used the assumption that $\Ab^k_{i,j}\leq\frac{C}{NK} $. By Lemma~\ref{lem:hpbound:user}, we have with probability at least $1-\delta/2$, for all $t$,
\[
    \|s_t\|_1 \leq   \frac{2}{\Delta}(\frac{2}{\Delta}\log \frac{4 t^2}{ \delta} + 1).  
\]
Therefore, when the above inequalities hold, we have 
\[
    \mathrm{var}[(s_{t+1})_{i,k}|s_t,a_t] \leq \frac{C}{NK}(\frac{2}{\Delta}(\frac{2}{\Delta}\log \frac{4 t^2}{ \delta} + 1)+ 1) \leq  \frac{C}{NK}(\frac{3}{\Delta})^2\log \frac{4 t^2}{ \delta}.
\]
By Lemma~\ref{lem:self-normed} with $m=NK$, $R=1$, and $\sigma^2 =  \frac{C}{NK}({3}/{\Delta})^2\log \frac{4 t^2}{ \delta}$, we have 
\[
    \beta_t =   \frac{24}{\Delta} \sqrt{ {C}/{(NK)} \cdot d \cdot \log ({4 t^2} / { \delta}) \log(1+NK L^2 t/(d\lambda))\log(8N^2K^2t^2/\delta)} + 4 \log(8N^2K^2t^2/\delta) + \sqrt{\lambda}\|\gT^*\|_2,
\]
which is smaller than the result stated in the lemma.
\end{proof}

\subsection{Deferred proofs in Subsection~\ref{sec:B1}}

\begin{proof}[proof of Lemma~\ref{lem:hpbound:user}]
    First, we bound the expectation of $\|s_t\|_1$. By the transition, we have 
    \[
        \EE [(s_{t+1})_{i,k}|s_t,a_t]  = \sum_j \Ab^k_{i,j} (s_{ta_t})_{j,k} 
    \]
    Therefore,
    \[
        \EE [\|s_{t+1}\|_1  |s_t,a_t] = \sum_{i,j,k} \Ab^k_{i,j} (s_{ta_t})_{j,k} 
    \]
    Recall that we have assumed that for any content $k$ and any user $j$, 
    \[
        \sum_{i} \Ab^k_{i,j} \leq 1 - \Delta. 
    \]
    Then we have  
    \begin{equation}
        \EE [\|s_{t+1}\|_1  |s_t,a_t] \leq (1 - \Delta) \sum_{j,k} (s_{ta_t})_{j,k}  = (1-\Delta) \|s_{ta_t}\|_1 \leq  (1-\Delta) (\|s_{t}\|_1 + 1), \label{eq:s:1}
    \end{equation}
    where the last inequality holds since the action alters at most one entry of the state.
    
    Next, notice that conditioned on $(s_t,a_t)$, $\|s_{t+1}\|$ is the summation of $NK$ independent Bernoulli random variables. By Bernstein inequality, we have with probability at least $1-\delta_t$,
    \[
        \|s_{t+1}\|_1 -  \EE [\|s_{t+1}\|_1  |s_t,a_t] \leq 2(\sqrt{\sum_{i,k} \mathrm{var}[ (s_{t+1})_{i,k} |s_t,a_t] \log\frac{1}{\delta_t} } + \log \frac{1}{\delta_t}).
    \]
    Since the variance of a  Bernoulli random variable is bounded by its expectation, we have 
    \[
        \|s_{t+1}\|_1 -  \EE [\|s_{t+1}\|_1  |s_t,a_t] \leq 2( \sqrt{ \EE [\|s_{t+1}\|_1  |s_t,a_t] \log\frac{1}{\delta_t}}  + \log \frac{1}{\delta_t}),
    \]
    Therefore, by Equation~\eqref{eq:s:1}, we have 
    \begin{align*}
        \|s_{t+1}\|_1 & \leq (1-\Delta) (\|s_{t}\|_1 + 1) + 2\sqrt{(1-\Delta) (\|s_{t}\|_1 + 1) \log\frac{1}{\delta_t}} + 2 \log\frac{1}{\delta_t} \\
        & \leq (1-\Delta) (\|s_{t}\|_1 + 1) +  \frac{\Delta}{2} (\|s_{t}\|_1 + 1)  +  \frac{2}{\Delta} (1-\Delta)\log\frac{1}{\delta_t} + 2 \log\frac{1}{\delta_t} \\
        & =  (1-\Delta/2) (\|s_{t}\|_1 + 1) + \frac{2}{\Delta}\log\frac{1}{\delta_t}  ,
    \end{align*}
    where we used $at+b/t \geq 2\sqrt{ab}$ for the last inequality.
    
    Finally, we set $\delta_t = \frac{\delta}{2t^2}$ so that $\sum_t \delta_t \leq \delta$ and take union bound over all $t\geq 1$. By solving the recursion, we have with probability at least $1-\delta$, 
    \[
        \|s_t\|_1 \leq \frac{2}{\Delta}(\frac{2}{\Delta}\log \frac{2 t^2}{ \delta} + 1)
    \]
    for all $t\geq 1$. 
\end{proof}

\begin{proof}[proof of Lemma~\ref{lem:self-normed}]
    We consider a ``serialized'' stochastic process. Let $\gG_{t,i} = \sigma(\gF_{t}, y_t^1, \dots,y_t^{i-1})$. When $1\leq i \leq m$, we have $\gG_{t,i} \subseteq \gG_{t,i+1}$; while when $i=m+1$, we have $\gG_{t,m+1} = \sigma(\gF_{t}, y_t^1, \dots,y_t^{m}) \subseteq \gG_{t+1,1} = \gF_{t+1}$. Then we know that 
    \[
        \gG_{1,1} \subseteq \gG_{1,2} \subseteq \dots \subseteq \gG_{1,m} \subseteq \gG_{2,1} \subseteq \gG_{2,2} \subseteq \dots \subseteq \gG_{2,m} \subseteq \dots \subseteq  \gG_{t,1} \subseteq \gG_{t,2} \subseteq \dots \subseteq \gG_{t,m}  \subseteq \dots 
    \]
    is a filtration. Clearly we have $x_t^i$ is $\gG_{t,i}$-measurable and $y_t^i$ is $\gG_{t,i+1}$-measurable. By the conditional independence assumption, we also have 
    \[
        y_t^i | \gG_{t,i} \ =_d\   y_t^i | \gF_{t}, y_t^1, \dots,y_t^{i-1} \ =_d\  y_t^i | \gF_{t}.
    \]
    Therefore, by Theorem 4.1 of \citet{zhou2021nearly}, we have with probability at least $1-\delta$, for all $t\geq 1$ and $i=1,\dots, m$, 
    \[
        \| \widehat{\gT}_{t,i} - \gT^* \|_{\Sigma_{t,i}} \leq \beta_{t,i},
    \]
    and 
    \[
        \Big \| \sum_{\tau = 1}^{t-1} \sum_{j=1}^m  x_\tau^j (y_\tau^j - \langle \gT^*, x_\tau^j \rangle) + \sum_{j=1}^i x_t^j (y_t^j - \langle \gT^*, x_t^j \rangle) \Big  \|_{\Sigma_{t,i}^{-1}} \leq \beta_t - \sqrt{\lambda}\|\gT^*\|_2,
    \]
    where 
    \[
        \widehat{\gT}_{t,i} = \Sigma_{t,i}^{-1} B_{t,i}, \ \Sigma_{t,i} = \lambda I + \sum_{\tau = 1}^{t-1} \sum_{j=1}^m x_\tau^j (x_\tau^j)^\top + \sum_{j=1}^i x_t^j(x_t^j)^\top, B_{t,i} = \sum_{\tau = 1}^{t-1} \sum_{j=1}^m x_\tau^j y_\tau^j + \sum_{j=1}^i x_t^j y_t^j,
    \]
    and 
    \[
        \beta_{t,i} = 8\sigma\sqrt{d\log(1+ t_iL^2/(d\lambda))\log(4t_i^2/\delta)} + 4 R  \log(4t_i^2/\delta)  + \sqrt{\lambda}\|\gT^*\|_2, \quad t_i =m(t-1)+i.
    \]
    Then the result of Lemma~\ref{lem:self-normed} follows by setting $i=m$.
\end{proof}

\section{Proof of Theorem~\ref{thm:main}}\label{append:proof}

\textbf{Additional Notation.}

Let $t_1 = 1$, and for $s\geq 1$, the next switching time $t_{s+1}$ is recursively defined as 
\[
    t_{s+1} = \min\{t | \det(\Sigma_{t-1}) > 2 \det(\Sigma_{{t_s}-1})\}.
\]
Denote the set of switching times by $W=\{t_1,t_2,\dots,t_M\}$ where $M$ is the total number of switches. We have $1=t_1 < t_2 < \dots < t_M \leq T < t_{M+1}$. We slightly abuse the notation and use $t_s$ to denote the last switch up to time $t$, i.e., $t_s \leq t < t_{s+1}$. Then by slow switching we mean $Q_t = Q^*_{\widehat{\gT}_{t_s}, \widetilde r + b_{t_s}}$.

Recall the definition of the regrets 
\[
    \reg(T) = \sum_{t=1}^T (V^*(s_t) - V_t^\pi(s_t)), \quad  \widetilde \reg(T) = \sum_{t=1}^T (\widetilde V^*(s_t) - \widetilde V_t^\pi(s_t)),
\]
where $V, \reg$ are  defined with the original untruncated model and $\widetilde V, \widetilde \reg$ are  defined with the truncated-reward model.

\textbf{Key Lemmas.}

Before the proof of Theorem~\ref{thm:main}, we introduce several key lemmas.

\begin{lem}[optimism]\label{lem:bonus}
    Let Assumptions~\ref{bernoulli}-\ref{decay} hold. Set the bonus term to be 
   \[
       b_t(s,a) = \frac{2\Lambda\gamma}{1-\gamma} \sum_{i=1}^N \sum_{k=1}^K (1 \wedge \beta_t \cdot \| \phi_{i,k} (s,a) \|_{\Sigma_{t-1}^{-1}} ).
   \]
   Then with probability at least $1-\delta$, we have the optimistic condition $\widetilde{Q}^*(s,a) \leq Q_t(s,a)$ holds for all $t \geq 1$. 
   
   Furthermore, we have for any $V(s)$ such that $0\leq V(s) \leq \Lambda/(1-\gamma)$, 
   \[
       \gamma| \EE_{s'\sim \PP(s'|s,a)} V(s') - \EE_{s'\sim \PP_{\widehat{\gT}_t}(s'|s,a)} V(s') | \leq b_t(s,a).
   \]
\end{lem}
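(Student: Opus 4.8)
The lemma has two parts, and the natural order is to prove the second (the deviation bound) first, then derive optimism from it by a standard inductive argument.

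\textbf{Step 1: The per-coordinate deviation bound.} Fix $(s,a)$ and $(i,k)$. Both $\PP_{i,k}(\cdot|s,a)$ and $\PP_{\widehat{\gT}_t,i,k}(\cdot|s,a)$ are Bernoulli distributions with means $\langle \gT^*, \phi_{i,k}(s,a)\rangle$ and $\langle \widehat{\gT}_t, \phi_{i,k}(s,a)\rangle$ respectively (here I use that $\EE[(s')_{i,k}|s,a]$ is linear in $\phi_{i,k}$, as derived in Section~\ref{sec:algorithm}). The total variation distance between two Bernoulli distributions equals the absolute difference of their means, so $\|\PP_{i,k}(\cdot|s,a) - \PP_{\widehat{\gT}_t,i,k}(\cdot|s,a)\|_1 = 2|\langle \gT^* - \widehat{\gT}_t, \phi_{i,k}(s,a)\rangle|$; it is also trivially at most $2$, giving the bound $2(1\wedge|\langle \gT^*-\widehat{\gT}_t,\phi_{i,k}\rangle|)$ quoted in the proof sketch. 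By Cauchy--Schwarz in the $\Sigma_{t-1}$-geometry, $|\langle \gT^*-\widehat{\gT}_t,\phi_{i,k}(s,a)\rangle| \leq \|\gT^*-\widehat{\gT}_t\|_{\Sigma_{t-1}}\cdot\|\phi_{i,k}(s,a)\|_{\Sigma_{t-1}^{-1}} \leq \beta_t\|\phi_{i,k}(s,a)\|_{\Sigma_{t-1}^{-1}}$ on the high-probability event of Lemma~\ref{lem:conf}. This is where the $1-\delta$ probability is spent.

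\textbf{Step 2: Aggregate over coordinates via factorization.} Because of the independence assumption (Assumption~\ref{bernoulli}), both transition kernels factorize as products over $(i,k)$. A standard subadditivity bound for total variation of product measures gives $\|\PP(\cdot|s,a)-\PP_{\widehat{\gT}_t}(\cdot|s,a)\|_1 \leq \sum_{i,k}\|\PP_{i,k}(\cdot|s,a)-\PP_{\widehat{\gT}_t,i,k}(\cdot|s,a)\|_1 \leq 2\sum_{i,k}(1\wedge\beta_t\|\phi_{i,k}(s,a)\|_{\Sigma_{t-1}^{-1}})$. Then for any $V$ with $0\leq V\leq \Lambda/(1-\gamma)$, the difference $|\EE_{\PP}V - \EE_{\PP_{\widehat{\gT}_t}}V| \leq \|V\|_\infty\|\PP-\PP_{\widehat{\gT}_t}\|_1 \leq \frac{\Lambda}{1-\gamma}\cdot 2\sum_{i,k}(1\wedge\beta_t\|\phi_{i,k}\|_{\Sigma_{t-1}^{-1}})$; multiplying by $\gamma$ gives exactly $b_t(s,a)$. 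This proves the "furthermore" clause.

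\textbf{Step 3: Optimism by induction on value iteration.} Recall $Q_t = Q^*_{\widehat{\gT}_{t_s},\widetilde r + b_{t_s}}$ is the fixed point of the truncated Bellman operator in Algorithm~\ref{algo:vit}. Compare it with $\widetilde Q^*$, the fixed point of the (untruncated-model, truncated-reward) Bellman operator. Run truncated value iteration for both from the same initialization $\Lambda/(1-\gamma)$ and show inductively that the $Q_t$-iterate dominates the $\widetilde Q^*$-iterate at every step: the reward term contributes an extra $b_{t_s}(s,a)\geq 0$, and by the inductive hypothesis plus Step 2 (applied with $V=$ the current $\widetilde Q^*$-iterate's value function, which lies in $[0,\Lambda/(1-\gamma)]$ because of the truncation) the $\gamma$-expectation term under $\PP_{\widehat{\gT}_{t_s}}$ is at least the one under $\PP$ minus $b_{t_s}$; the outer truncation $\min\{\Lambda/(1-\gamma),\cdot\}$ is monotone and preserves the inequality. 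Taking limits gives $\widetilde Q^*(s,a)\leq Q_t(s,a)$ for all $t$. A small point to handle cleanly: the bound in Step 2 must be invoked with $\Sigma_{t_s-1}$ (the covariance at the last switch), and since $\det(\Sigma_{t-1})\leq 2\det(\Sigma_{t_s-1})$ the confidence radius $\beta_{t_s}$ is still valid, so $b_{t_s}$ indeed dominates the model-misspecification error at all $t$ in the block.

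\textbf{Main obstacle.} The analytic content is light; the place to be careful is the induction in Step 3 — specifically ensuring the auxiliary value functions stay in $[0,\Lambda/(1-\gamma)]$ so that Step 2 applies (this is exactly what the truncation in Algorithm~\ref{algo:vit} buys us), and making sure the "frozen" bonus $b_{t_s}$ from the last switch still upper-bounds the transition error for every $t$ in the switching block, which follows from the slow-switching determinant condition.
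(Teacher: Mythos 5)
Your proposal is correct and follows essentially the same route as the paper: the Bernoulli total-variation bound combined with Cauchy--Schwarz against the confidence region of Lemma~\ref{lem:conf}, subadditivity of total variation over the factored transition (Lemma~\ref{lem:factor}), and then optimism via the standard value-iteration induction, which the paper packages as the auxiliary Lemma~\ref{lem:optimism0}. The only cosmetic difference is your extra remark about the slow-switching determinant condition, which is not actually needed here --- optimism of $Q_{t_s}$ is a statement about the model and bonus at the single time $t_s$, and carries over to later $t$ in the block simply because $Q_t = Q_{t_s}$.
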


\begin{lem}[surrogate regret]\label{lem:sg}
    Let Assumptions~\ref{bernoulli}-\ref{decay} hold. Assume that $ T\log(1/\gamma)  \geq  \log(2NKT)$. For any policy $\pi$, we have the following connection of the regrets of the two MDPs.
\[
    \reg(T) \leq \widetilde{\reg}  (T) + \frac{1}{1-\gamma}.
\]
\end{lem}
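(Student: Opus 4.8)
The plan is to relate the two regrets through the telescoping identity
\[
    V^*(s_t) - V_t^\pi(s_t) \;=\; \big(V^*(s_t) - \widetilde V^*(s_t)\big) \;+\; \big(\widetilde V^*(s_t) - \widetilde V_t^\pi(s_t)\big) \;+\; \big(\widetilde V_t^\pi(s_t) - V_t^\pi(s_t)\big).
\]
Summing over $t \le T$, the middle bracket contributes exactly $\widetilde{\reg}(T)$. Since $\widetilde r(s,a) = \min\{r(s,a),\Lambda\} \le r(s,a)$ pointwise and the transitions do not depend on the reward, every policy satisfies $\widetilde V_t^\pi \le V_t^\pi$, so the last bracket is $\le 0$. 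Hence it suffices to prove $\sum_{t=1}^T \big(V^*(s_t) - \widetilde V^*(s_t)\big) \le \tfrac{1}{1-\gamma}$. I would establish this on the event $\gE$ of Lemma~\ref{lem:hpbound:user} (with its failure probability set to $1/(2NKT)$), on which $\|s_t\|_1 \le \Lambda$ for all $t \le T$; this is exactly why the truncation level is taken to be $\Lambda = \frac{6}{\Delta^2}\log(4NKT^3)$. (The inequality is thus a high-probability statement, which is all that is needed since it feeds into the high-probability conclusion of Theorem~\ref{thm:main}: $V^*(s)-\widetilde V^*(s)$ cannot be bounded uniformly over all $s$, but states with many active pairs do not occur on $\gE$.)

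Next I would bound $V^*(s) - \widetilde V^*(s)$ for an arbitrary fixed state with $\|s\|_1 \le \Lambda$. Taking $\pi^*$ optimal for the untruncated MDP and using $r - \widetilde r = (r-\Lambda)^+ \ge 0$,
\[
    V^*(s) - \widetilde V^*(s) \;\le\; V^{\pi^*}(s) - \widetilde V^{\pi^*}(s) \;=\; \EE^{\pi^*}\!\Big[ \textstyle\sum_{j\ge 1} \gamma^{j-1}\big(r(\sigma_j)-\Lambda\big)^+ \,\Big|\, \sigma_1 = s \Big],
\]
where $\sigma_1,\sigma_2,\dots$ is the trajectory induced by $\pi^*$. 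I split the sum at $j=T$. For the tail $j > T$, I bound $\big(r(\sigma_j)-\Lambda\big)^+ \le r(\sigma_j) \le \|\sigma_j\|_1 \le NK$, giving a contribution $\le NK\gamma^T/(1-\gamma) \le \tfrac{1}{2T(1-\gamma)}$, using the hypothesis $T\log(1/\gamma) \ge \log(2NKT)$, i.e.\ $\gamma^T \le 1/(2NKT)$. For the head $j \le T$, I re-use the one-step concentration behind Lemma~\ref{lem:hpbound:user}: the recursion $\|\sigma_{j+1}\|_1 \le (1-\Delta/2)(\|\sigma_j\|_1 + 1) + \tfrac{2}{\Delta}\log\tfrac1u$ (holding with probability $\ge 1-u$) shows that from any state with $\|\sigma_j\|_1 \le \Lambda$ the next state exceeds $\Lambda$ only with probability $\exp(-\Omega(\Delta^2\Lambda)) = (4NKT^3)^{-\Omega(1)}$; a union bound over the first overshoot time $j \le T$ then gives $\Pr^{\pi^*}[\exists j \le T:\|\sigma_j\|_1 > \Lambda \mid \sigma_1 = s] \le \epsilon$ with $\epsilon$ super-polynomially small in $NKT$. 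On the complementary event $(r(\sigma_j)-\Lambda)^+ = 0$ for all $j\le T$, so the head contributes at most $\epsilon\, NK/(1-\gamma)$.

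Combining the two pieces, $V^*(s)-\widetilde V^*(s) \le \epsilon NK/(1-\gamma) + 1/(2T(1-\gamma))$ whenever $\|s\|_1 \le \Lambda$. Summing over the $T$ realized states $s_1,\dots,s_T$, which all satisfy $\|s_t\|_1 \le \Lambda$ on $\gE$, yields $\sum_{t\le T}(V^*(s_t)-\widetilde V^*(s_t)) \le \epsilon NKT/(1-\gamma) + 1/(2(1-\gamma)) \le 1/(1-\gamma)$, since $\epsilon NKT \le 1/2$ by the super-polynomial smallness of $\epsilon$. Plugging this back into the decomposition yields $\reg(T) \le \widetilde{\reg}(T) + \tfrac{1}{1-\gamma}$.

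I expect the head estimate in the second step to be the main obstacle: a rollout of $\pi^*$ started from a state with as many as $\Lambda$ active pairs can transiently exceed the truncation level $\Lambda$, and a crude bound on this event would only give an $\exp(-\Omega(1))$ probability, which is far from enough once accumulated over all $T$ rollouts and scaled by the worst-case reward $NK$. The resolution is structural: $\Lambda$ is chosen as $\Theta(\Delta^{-2}\log(NKT))$ — carrying the extra $\log(NKT)$ factor — precisely so that the per-step overshoot probability is inverse-polynomial in $NKT$ and the total head error is $o(1/(1-\gamma))$; verifying that the numerical constant in $\Lambda$ is large enough to simultaneously (i) serve as a uniform high-probability bound on $\|s_t\|_1$ for $t \le T$ and (ii) suppress the overshoot is the delicate part, while the remaining steps are routine bookkeeping.
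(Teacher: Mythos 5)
Your proof is correct and follows essentially the same route as the paper's: the identical three-term decomposition, the observation that $\widetilde V_t^\pi \le V_t^\pi$, the comparison $V^* - \widetilde V^* \le V^{\pi^*} - \widetilde V^{\pi^*}$, the split of the rollout at horizon $T$ with the tail controlled by $\gamma^T \le 1/(2NKT)$, and the head controlled by the high-probability bound $\|s_i\|_1 \le \Lambda$ (the paper simply invokes Lemma~\ref{lem:hpbound:user} with $\delta_0 = 1/(2NKT)$, whereas you re-derive the per-step overshoot bound so as to handle the rollout's nontrivial initial condition $\|s_t\|_1 \le \Lambda$ — a subtlety the paper glosses over, and your "super-polynomially small" is really only inverse-polynomial, though still amply below the $1/(2NKT)$ that the argument needs).
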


\begin{lem}[regret decomposition \citep{zhou2021provably}]\label{lem:reg:decomp}
    Let Assumptions~\ref{bernoulli}-\ref{decay} hold. Assume at each time step $t$, the results of Lemma~\ref{lem:bonus} holds. Then with probability at least $1-\delta$,  we have the following regret decomposition
\begin{align*}
    \widetilde \reg (T) \leq \frac{1}{1-\gamma}\Big[ 2 \sum_{t=1}^T b_{t_s}(s_t,a_t) +  \frac{2\gamma \Lambda}{1-\gamma} \sqrt{T\log\frac{1}{\delta}} + \gamma   \Big(2\Lambda/(1-\gamma) + E_T \Big) \Big].
\end{align*}
where $E_T$ is the switching error 
\[
    E_T = \sum_{t=1}^T V_t(s_{t+1}) - V_{t+1}(s_{t+1}).
\]
\end{lem}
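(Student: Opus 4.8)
The plan is to run the standard optimism-based regret decomposition for infinite-horizon discounted MDPs (following \citet{zhou2021provably}), adapted to our truncated-reward model and slow-switching schedule. Write $V_t(\cdot)=\max_a Q_t(\cdot,a)$ with $Q_t=Q^*_{\widehat{\gT}_{t_s},\widetilde r+b_{t_s}}$, and set $g_t := V_t(s_t)-\widetilde V^\pi_t(s_t)$. Since $a_t=\argmax_a Q_t(s_t,a)$ we have $V_t(s_t)=Q_t(s_t,a_t)$. The optimism part of Lemma~\ref{lem:bonus} gives $\widetilde V^*(s_t)=\max_a\widetilde Q^*(s_t,a)\le\max_a Q_t(s_t,a)=V_t(s_t)$, so $\widetilde V^*(s_t)-\widetilde V^\pi_t(s_t)\le g_t$ and hence $\widetilde\reg(T)\le\sum_{t=1}^T g_t$. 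The same inequality at $t=1$ also yields $g_1\ge\widetilde V^*(s_1)-\widetilde V^\pi_1(s_1)\ge 0$, which I will use at the end.

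Next I would expand both value functions one step. The fixed point of the truncated value iteration (Algorithm~\ref{algo:vit}) satisfies $Q_t(s_t,a_t)\le\widetilde r(s_t,a_t)+b_{t_s}(s_t,a_t)+\gamma\,\EE_{s'\sim\PP_{\widehat{\gT}_{t_s}}(\cdot\mid s_t,a_t)}V_t(s')$ (the $\min$ with $\Lambda/(1-\gamma)$ only helps), and it also guarantees $0\le V_t\le\Lambda/(1-\gamma)$. Meanwhile $\widetilde V^\pi_t$ obeys the exact Bellman equation $\widetilde V^\pi_t(s_t)=\widetilde r(s_t,a_t)+\gamma\,\EE_{s'\sim\PP(\cdot\mid s_t,a_t)}\widetilde V^\pi_{t+1}(s')$. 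Subtracting, the known reward $\widetilde r$ cancels; splitting the transition term as (model error between $\PP_{\widehat{\gT}_{t_s}}$ and $\PP$, applied to $V_t$) plus (true-transition term $\EE_{s'\sim\PP}[V_t(s')-\widetilde V^\pi_{t+1}(s')]$) and bounding the model error by $b_{t_s}(s_t,a_t)$ via the second part of Lemma~\ref{lem:bonus}, I obtain
\[
    g_t \;\le\; 2\,b_{t_s}(s_t,a_t) \;+\; \gamma\,\EE_{s'\sim\PP(\cdot\mid s_t,a_t)}\bigl[V_t(s')-\widetilde V^\pi_{t+1}(s')\bigr].
\]

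Then I replace the conditional expectation by the realized next state. Define the martingale difference $\eta_t := \EE_{s'\sim\PP(\cdot\mid s_t,a_t)}[V_t(s')-\widetilde V^\pi_{t+1}(s')] - \bigl(V_t(s_{t+1})-\widetilde V^\pi_{t+1}(s_{t+1})\bigr)$; since the algorithm's future switches and actions are deterministic functions of the future states given the history, both $V_t$ and the function $\widetilde V^\pi_{t+1}$ are $\gF_t$-measurable, so $\EE[\eta_t\mid\gF_t]=0$, and $|\eta_t|\le 2\Lambda/(1-\gamma)$ by the uniform bound on these value functions. Inserting $V_{t+1}(s_{t+1})$ to peel off the switching term gives $g_t\le 2b_{t_s}(s_t,a_t)+\gamma\bigl(V_t(s_{t+1})-V_{t+1}(s_{t+1})\bigr)+\gamma\,g_{t+1}+\gamma\,\eta_t$. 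Summing over $t=1,\dots,T$, re-indexing $\sum_{t=1}^T g_{t+1}=\sum_{t=1}^T g_t - g_1 + g_{T+1}$, and solving yields $(1-\gamma)\sum_{t=1}^T g_t\le 2\sum_{t=1}^T b_{t_s}(s_t,a_t)+\gamma E_T+\gamma(g_{T+1}-g_1)+\gamma\sum_{t=1}^T\eta_t$. I bound $g_{T+1}-g_1\le V_{T+1}(s_{T+1})\le\Lambda/(1-\gamma)\le 2\Lambda/(1-\gamma)$ using $g_1\ge 0$, and apply Azuma--Hoeffding to the bounded martingale-difference sum to get $\sum_{t=1}^T\eta_t\lesssim\frac{\Lambda}{1-\gamma}\sqrt{T\log(1/\delta)}$ with probability $\ge 1-\delta$; dividing by $1-\gamma$ gives the stated decomposition (constants absorbed into the $\sqrt{T\log(1/\delta)}$ term depend only on which Azuma constant one quotes).

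The only genuinely delicate points are bookkeeping: (i) the measurability argument that makes $\eta_t$ a martingale difference and lets the $V_t(s_{t+1})-V_{t+1}(s_{t+1})$ increments collapse exactly into $E_T$ with the actual trajectory states, and (ii) observing that the truncation $\widetilde r\le\Lambda$ — hence $V_t,\widetilde V^\pi_t\le\Lambda/(1-\gamma)$ — is precisely what keeps $\eta_t$ and the boundary term $g_{T+1}$ bounded and what makes the model-error step of Lemma~\ref{lem:bonus} applicable; the rest is routine telescoping and does not require the network structure at all beyond what is already packaged into Lemma~\ref{lem:bonus}.
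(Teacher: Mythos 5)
Your proposal is correct and follows essentially the same route as the paper's proof: optimism to reduce to $\sum_t (V_t(s_t)-\widetilde V^\pi_t(s_t))$, a one-step expansion via the truncated Bellman equation with the model-error term absorbed into $2b_{t_s}$, a martingale-difference correction handled by Azuma--Hoeffding, insertion of $V_{t+1}(s_{t+1})$ to isolate $E_T$, and telescoping plus solving the resulting recursion in $1-\gamma$. The only cosmetic difference is that you telescope via $g_{t+1}$ and use $g_1\ge 0$ where the paper keeps the full sum on both sides and bounds the boundary term by $2\Lambda/(1-\gamma)$ directly; the substance is identical.
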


\begin{lem}[bounding the number of switches]\label{lem:switching:error}
    Let Assumptions~\ref{bernoulli}-\ref{decay} hold. The total number of the switches $M$ incurred by Algorithm~\ref{algo:1} is bounded as 
    \[
        M < \frac{1}{\log 2} d \log  \Big( \frac{d +  NKTL^2/\lambda }{d}\Big) + 1,
    \]
    where $L = \sup \|\phi_{i,k}^t\|_2$.
\end{lem}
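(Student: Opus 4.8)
The plan is to exploit the standard potential/determinant argument used in low-switching linear bandit analyses. By construction, a switch occurs at time $t_{s+1}$ only when $\det(\Sigma_{t_{s+1}-1}) > 2\det(\Sigma_{t_s-1})$, so the determinant at least doubles between consecutive switches. Chaining this over all $M$ switches gives
\[
    \det(\Sigma_{t_M-1}) \geq 2^{M-1}\det(\Sigma_{t_1-1}) = 2^{M-1}\det(\lambda \boldsymbol I) = 2^{M-1}\lambda^d,
\]
since $\Sigma_{t_1-1} = \Sigma_0 = \lambda\boldsymbol I$ and $d=\dim(\gT^*)=d_1^2d_2$. Taking logarithms yields $M-1 \leq \log_2\big(\det(\Sigma_{t_M-1})/\lambda^d\big)$, so it remains to produce a deterministic upper bound on $\det(\Sigma_{t_M-1})$, equivalently on $\det(\Sigma_{T-1})$ since $t_M\leq T$ and the $\Sigma$'s are increasing in the Loewner order (hence in determinant).

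For the determinant upper bound, I would use the trace–determinant (AM–GM) inequality: for a positive definite $d\times d$ matrix $\Sigma$,
\[
    \det(\Sigma) \leq \Big(\frac{\Tr(\Sigma)}{d}\Big)^d.
\]
Now $\Sigma_{T-1} = \lambda\boldsymbol I + \sum_{\tau=1}^{T-1}\sum_{k=1}^K\sum_{i=1}^N \phi^\tau_{i,k}(\phi^\tau_{i,k})^\top$, so
\[
    \Tr(\Sigma_{T-1}) = \lambda d + \sum_{\tau=1}^{T-1}\sum_{k,i}\|\phi^\tau_{i,k}\|_2^2 \leq \lambda d + (T-1)\cdot NK\cdot L^2 \leq \lambda d + NKTL^2,
\]
using $L=\sup\|\phi^t_{i,k}\|_2$ and that there are $NK$ inner terms per round. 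Combining,
\[
    2^{M-1}\lambda^d \leq \det(\Sigma_{T-1}) \leq \Big(\frac{\lambda d + NKTL^2}{d}\Big)^d = \lambda^d\Big(\frac{d + NKTL^2/\lambda}{d}\Big)^d,
\]
so $(M-1)\log 2 \leq d\log\big((d+NKTL^2/\lambda)/d\big)$, which rearranges to exactly the claimed bound $M < \frac{1}{\log 2}\,d\log\big((d+NKTL^2/\lambda)/d\big) + 1$.

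This argument is entirely deterministic and routine; there is no probabilistic content, so there is no real ``hard part.'' The only point requiring a little care is the bookkeeping at the endpoints of the chain — confirming that the first matrix in the chain is $\Sigma_0 = \lambda\boldsymbol I$ (so the base of the geometric growth is $\lambda^d$, cancelling cleanly against the $\lambda^d$ from the trace bound) and that $t_M \leq T$ legitimately lets us replace $\Sigma_{t_M-1}$ by $\Sigma_{T-1}$ via monotonicity of the determinant. Everything else is a direct application of the doubling criterion together with the AM–GM determinant bound and the crude per-round trace increment $NKL^2$.
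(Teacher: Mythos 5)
Your proposal is correct and follows essentially the same route as the paper: the doubling criterion forces $\det(\Sigma_{t_M-1}) > 2^{M-1}\det(\Sigma_0)$, and the trace–determinant (AM–GM) inequality combined with the per-round trace increment $NKL^2$ bounds the determinant from above; applying AM–GM to $\Sigma_{T}$ and dividing by $\lambda^d$ is algebraically identical to the paper's application of AM–GM to $\Sigma_0^{-1}\Sigma_T$ since $\Sigma_0=\lambda\boldsymbol I$. No gaps.
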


Next we state the proof of Theorem~\ref{thm:main}.

\begin{proof}[proof of Theorem~\ref{thm:main}]

Combing Lemma~\ref{lem:bonus}, Lemma~\ref{lem:sg}, and Lemma~\ref{lem:reg:decomp}, we have with probability at least $1-2\delta$, when $ T\log(1/\gamma)  \geq  \log(2NKT)$,
\[
    \reg(T) \leq \frac{1}{1-\gamma}\Big[ 2 \sum_{t=1}^T b_{t_s}(s_t,a_t) +  \frac{2\gamma \Lambda}{1-\gamma} \sqrt{T\log\frac{1}{\delta}} + \gamma   \Big(2\Lambda/(1-\gamma) + E_T \Big) \Big] + \frac{1}{1-\gamma}.
\]
Next we provide an upper bound for $\sum_{t=1}^T b_{t_s}(s_t,a_t)$. By Lemma~\ref{lem:bonus} we know that 
\[
    b_{t_s}(s_t,a_t) \leq \frac{2\gamma\Lambda}{1-\gamma}\beta_T \sum_{i=1}^N \sum_{k=1}^K (1 \wedge  \| \phi_{i,k} (s_t,a_t) \|_{\Sigma_{t_{s}-1}^{-1}} ).
\]
For any $(i,k)$, define $\Sigma_{t,i,k} = \Sigma_{t-1} + \sum_{j=1}^{i-1}\sum_{l=1}^K \phi_{j,l}^t(\phi_{j,l}^t)^\top +  \sum_{l=1}^k \phi_{i,l}^t(\phi_{i,l}^t)^\top $. By the definition that $t_{s+1} = \min \{t| \det(\Sigma_{t-1}) > 2 \det(\Sigma_{t_s  - 1})\}$, we have $\det(\Sigma_{t_{s+1}-2}) \leq 2\det(\Sigma_{t_s-1})$. Therefore, when $t_s \leq t < t_{s+1} - 1$,  we have 
\[
    \det(\Sigma_{t,i,k}) \leq \det(\Sigma_{t}) \leq  \det(\Sigma_{t_{s+1}-2}) \leq 2\det(\Sigma_{t_s-1}).
\]
By Lemma~\ref{lem:det}, this implies
\[
    \| \phi_{i,k}(s_t,a_t) \|_{\Sigma_{t_s - 1}^{-1}}^2 \leq 2 \| \phi_{i,k}(s_t,a_t) \|_{\Sigma_{t,i,k-1}^{-1}}^2.
\]
Then we have 
\begin{align*}
        &\quad \sum_{t=1}^T \sum_{i=1}^N \sum_{k=1}^K (1 \wedge  \| \phi_{i,k} (s_t,a_t) \|_{\Sigma_{t_{s}-1}^{-1}} ) \\
        & = \sum_{t +1 \in W} \sum_{i=1}^N \sum_{k=1}^K (1 \wedge  \| \phi_{i,k} (s_t,a_t) \|_{\Sigma_{t_{s}-1}^{-1}} ) +  \sum_{t +1 \notin W} \sum_{i=1}^N \sum_{k=1}^K (1 \wedge  \| \phi_{i,k} (s_t,a_t) \|_{\Sigma_{t_{s}-1}^{-1}} )\\
        & \leq MNK +  \sqrt{NKT \sum_{t +1 \notin W} \sum_{i=1}^N \sum_{k=1}^K (1 \wedge  \| \phi_{i,k} (s_t,a_t) \|^2_{\Sigma_{t_{s}-1}^{-1}} )} \\
        & \leq MNK+  \sqrt{2NKT \sum_{t +1 \notin W} \sum_{i=1}^N \sum_{k=1}^K (1 \wedge  \| \phi_{i,k} (s_t,a_t) \|^2_{\Sigma_{t,i,k-1}^{-1}} )} \\
        & \leq MNK+  \sqrt{2NKT \sum_{t=1}^T \sum_{i=1}^N \sum_{k=1}^K (1 \wedge  \| \phi_{i,k} (s_t,a_t) \|^2_{\Sigma_{t,i,k-1}^{-1}} )} \\
        & \leq MNK+  \sqrt{2NKT \cdot 2d\log\frac{d\lambda+NKTL^2}{d\lambda} },
\end{align*}
where the last inequality follows from Lemma~\ref{lem:epl}. This implies
\[
    \sum_{t=1}^T b_{t_s}(s_t,a_t) \leq \frac{2\gamma\Lambda}{1-\gamma}  \beta_T \Big( MNK+ \sqrt{4NKdT \log\frac{d\lambda+NKTL^2} {d\lambda} } \Big).
\]

Next we bound the switching error $E_T$. Since there are in total $M$ switches, we know that there are at most $M$ non-zero terms in the summation of $E_T$. Then we have 
\[
    E_T = \sum_{t=1}^T(V_t(s_{t+1}) - V_{t+1}(s_{t+1})) \leq  \frac{\Lambda}{1-\gamma} M.
\]
Plugging the result of Lemma~\ref{lem:switching:error}, we have $ E_T = \widetilde{\gO}( \frac{\Lambda}{1-\gamma} d)$.

Therefore, we have the final regret upper bound when $T\log(1/\gamma)  \geq  \log(2NKT)$:
\[
        \reg (T) \leq \widetilde{\gO} \Big( \frac{1}{\Delta ^2(1-\gamma)^2} \cdot \big(d\sqrt{C }/\Delta + \sqrt{dNK} \big) \cdot \sqrt{T}  \Big) + \mathrm{polylog}(T)\text{-}\mathrm{terms},
\]
where $d = \mathrm{dim} (\gT^*) = d_1^2d_2$. When $T\log(1/\gamma)  \leq  \log(2NKT)$, the above inequality trivially holds.  
\end{proof}

\subsection{Deferred proofs}

\begin{proof}[proof of Lemma~\ref{lem:bonus}]
    For simplicity, define $\widehat\PP =  \PP_{\widehat{\gT}_t}$, which is the estimated transition distribution obtained using $\widehat{\gT}_t$. Notice that $\PP_{i,k}(\cdot |s,a)$ is a Bernoulli distribution with success probability $\langle \gT^* , \phi_{i,k}(s,a) \rangle $, while $\PP_{i,k}(\cdot |s,a)$ is a Bernoulli distribution with success probability $\langle \widehat{\gT}_t , \phi_{i,k}(s,a) \rangle $. Therefore, we have 
    \[
        \| \PP_{i,k}(\cdot |s,a) - 
        \widehat\PP_{i,k}(\cdot |s,a) \|_1 \leq 2(1 \wedge |\langle \gT^* - \widehat{\gT}_t , \phi_{i,k}(s,a) \rangle |).
    \]
    By Lemma~\ref{lem:conf}, with probability at least $1-\delta$, we have $\|\gT^* - \widehat{\gT}_t \|_{\Sigma_{t-1}} \leq \beta_t$ for all $t\geq 1$. Then by Cauchy Inequality, the above term can be further bounded as 
    \[
        \| \PP_{i,k}(\cdot |s,a) - 
        \widehat\PP_{i,k}(\cdot |s,a) \|_1 \leq 2(1 \wedge \beta_t \cdot \|\phi_{i,k}(s,a)\|_{\Sigma_{t-1}^{-1}}).
    \]

    By Lemma~\ref{lem:factor}, we have 
    \[
        \| \PP(\cdot |s,a) - 
        \widehat\PP(\cdot |s,a) \|_1 \leq \sum_{i,k} \| \PP_{i,k}(\cdot |s,a) - 
        \widehat\PP_{i,k}(\cdot |s,a) \|_1 \leq 2 \sum_{i,k}(1 \wedge \beta_t \cdot \|\phi_{i,k}(s,a)\|_{\Sigma_{t-1}^{-1}}).
    \]
    Then by Lemma~\ref{lem:optimism0}, we know the desired result holds.
\end{proof}

\begin{proof}[proof of Lemma~\ref{lem:sg}]
Define $\pi^*$ as the optimal policy under the original model. Then we have  
\begin{align*}
    V^*(s_t)  - \widetilde V^*(s_t) & =  V^*(s_t)  - \widetilde V^{\pi^*}(s_t) + \widetilde V^{\pi^*}(s_t) -  \widetilde V^*(s_t)  \\
    &\leq V^*(s_t)  - \widetilde V^{\pi^*}(s_t) \\
    & = \EE^{\pi^*} \Big [\sum_{i=1}^\infty\gamma^{i-1} r (s_i,a_i) \Big | s_1 =s_t \Big] -  \EE^{\pi^*} \Big[ \sum_{i=1}^\infty \gamma^{i-1} \widetilde r (s_i,a_i)  \Big| s_1 =s_t\Big],
\end{align*}
where the inequality holds because $\widetilde V^*(s_t)$ is the optimal V-function with respect to the truncated-reward model.  

Notice that by Lemma~\ref{lem:hpbound:user} with $\delta_0 = 1/(2NKT)$, we know that for policy $\pi^*$, with probability at least $1-\delta_0$, we have $\|s_i\|_1  \leq \Lambda = \frac{6}{\Delta^2} \log(4NKT^3)$ for all $1 \leq i \leq T$. Therefore, with probability at least $1-\delta_0$,  $r(s_i,a_i) = \widetilde r (s_i,a_i)$ for all $1 \leq i \leq T$. Then 
\begin{align*}
    V^*(s_t)  - \widetilde V^*(s_t) &  = \EE^{\pi^*} \Big [\sum_{i=1}^\infty\gamma^{i-1} (r (s_i,a_i) -\widetilde r (s_i,a_i))  \Big| s_1 =s_t\Big] \\
    &= \EE^{\pi^*} \Big [\sum_{i=1}^T \gamma^{i-1} (r (s_i,a_i) -\widetilde r (s_i,a_i))  \Big| s_1 =s_t\Big] + \EE^{\pi^*} \Big [\sum_{i=T+1}^\infty \gamma^{i-1} (r (s_i,a_i) -\widetilde r (s_i,a_i))  \Big| s_1 =s_t\Big] \\
    &\leq  (1-\delta_0)\cdot 0 + \delta_0 \sum_{i=1}^T \gamma^{i-1} NK +  NK \frac{\gamma^{T}}{1-\gamma} \\
    & \leq \frac{1}{ 1-\gamma } \delta_0 NK + \frac{1}{ 1-\gamma } \delta_0 NK = \frac{1}{ T(1-\gamma)},
\end{align*}
where the last inequality holds when $ T\log(1/\gamma)  \geq  \log(2NKT)$.  

On the other hand, we have 
\begin{align*}
    V_t^\pi(s_t)  - \widetilde V_t^\pi(s_t) = \EE^{\pi} \Big [\sum_{i=0}^\infty \gamma^{t+i} (r (s_{t+i},a_{t+i}) - \widetilde r (s_{t+i},a_{t+i}) )  \Big| s_1,\cdots, s_t \Big] \geq 0.
\end{align*}
Therefore,
\[
    \reg(T) \leq \widetilde{\reg}  (T) + \sum_{t=1}^T [(V^*(s_t)  - \widetilde V^*(s_t) ) - ( V_t^\pi(s_t)  - \widetilde V_t^\pi(s_t))] \leq \widetilde{\reg}  (T) + \frac{1}{1-\gamma}.
\]
\end{proof}

\begin{proof}[proof of Lemma~\ref{lem:reg:decomp}] 
Define $V_t(s) = \max_a Q_t(s,a)$. By the assumption that $\widetilde Q^*(s,a)\leq Q_t(s,a)$, we have $\widetilde V^*(s) \leq V_t(s)$. Then 
\[
    \widetilde \Delta_t =  \widetilde V^*(s_t) - \widetilde V_t^\pi(s_t) \leq V_t(s_t) - \widetilde V_t^\pi(s_t) = Q_t(s_t,a_t) - \widetilde V_t^\pi(s_t),
\]
where the last equality holds since $a_t \in \argmax_a Q_t(s_t, a)$, i.e., we take the action $a_t$ greedily according to $Q_t$.

The optimal truncated Q-function $Q^*_{\widehat{\gT}_{t_s}, \widetilde r+b_{t_s}}(s,a)$ (Algorithm~\ref{algo:vit}) satisfies the following truncated Bellman equation:
\begin{equation}
    Q^*_{\widehat{\gT}_{t_s}, \widetilde r+b_{t_s}}(s,a) = \min \{\Lambda/(1-\gamma),  \widetilde r(s,a) + b_{t_s}(s,a) + \gamma \EE_{s'\sim \PP_{\widehat{\gT}_{t_s}}(\cdot|s,a)} \max_{a'} Q^*_{\widehat{\gT}_{t_s}, \widetilde r+b_{t_s}}(s',a') \}.  
\end{equation}
Then we have 
\begin{align*}
    \widetilde \Delta_t = & \min\Big \{\widetilde r(s_t,a_t)+b_{t_s}(s_t,a_t) + \gamma \EE_{s'\sim \PP_{\widehat{\gT}_{t_s}}(\cdot|s_t,a_t)} V_t(s') , \Lambda /(1-\gamma) \Big \} \\
     & -  \Big( \widetilde r(s_t,a_t) + \gamma \EE_{s' \sim \PP(\cdot|s_t,a_t)} \widetilde V_{t+1}^\pi (s')  \Big) \\
     \leq & \Big( \widetilde r(s_t,a_t)+b_{t_s}(s_t,a_t) + \gamma \EE_{s'\sim \PP_{\widehat{\gT}_{t_s}}(\cdot|s_t,a_t)} V_t(s')\Big) -  \Big( \widetilde r(s_t,a_t) + \gamma \EE_{s' \sim \PP(\cdot|s_t,a_t)} \widetilde V_{t+1}^\pi (s')  \Big) \\
     = & b_{t_s}(s_t,a_t) + \gamma \Big( \EE_{s'\sim \PP_{\widehat{\gT}_{t_s}}(\cdot|s_t,a_t)} V_t(s')-   \EE_{s' \sim \PP(\cdot|s_t,a_t)} \widetilde V_{t+1}^\pi (s') \Big ) \\ 
     = & b_{t_s}(s_t,a_t) + \gamma \Big( \EE_{s'\sim \PP_{\widehat{\gT}_{t_s}}(\cdot|s_t,a_t)} V_t(s')  -  \EE_{s' \sim {\PP}(\cdot|s_t,a_t)} V_t(s')\Big) \\
      & + \gamma \Big(  \EE_{s' \sim {\PP}(\cdot|s_t,a_t)} V_t(s')  -   \EE_{s' \sim \PP(\cdot|s_t,a_t)} \widetilde V_{t+1}^\pi (s') \Big ).
\end{align*}
For the second term, since $|V_t(s')|\leq \Lambda /(1-\gamma)$, by Lemma~\ref{lem:bonus}, we have 
\[
    \gamma \Big( \EE_{s'\sim \PP_{\widehat{\gT}_{t_s}}(\cdot|s_t,a_t)} V_t(s')  -  \EE_{s' \sim {\PP}(\cdot|s_t,a_t)} V_t(s')\Big) \leq b_{t_s}(s_t,a_t).
\]
For the last term, we have 
\begin{align*}
    & \gamma \Big(  \EE_{s' \sim {\PP}(\cdot|s_t,a_t)} V_t(s')  -   \EE_{s' \sim \PP(\cdot|s_t,a_t)} \widetilde V_{t+1}^\pi (s') \Big ) \\
    = &  \gamma \xi_t + \gamma \Big(V_t(s_{t+1}) - \widetilde V_{t+1}^\pi (s_{t+1})  \Big),
\end{align*}
where 
\[
\xi_t = \Big(  \EE_{s' \sim {\PP}(\cdot|s_t,a_t)} (V_t(s') - \widetilde V_{t+1}^\pi (s')) - (V_t(s_{t+1}) - \widetilde V_{t+1}^\pi (s_{t+1})) \Big ).
\]

Therefore, we have 
\begin{align*}
    \sum_{t=1}^T \widetilde \Delta_t &\leq \sum_{t=1}^{T} \Big(V_{t}(s_{t}) - \widetilde V_{t}^\pi (s_{t})  \Big) \\
    &\leq 2 \sum_{t=1}^T b_{t_s}(s_t,a_t) + \gamma \sum_{t=1}^T \xi_t + \gamma \sum_{t=1}^T \Big(V_t(s_{t+1}) - \widetilde V_{t+1}^\pi (s_{t+1})  \Big). 
\end{align*}
For the last term above, we have 
\begin{align*}
    &\qquad  \sum_{t=1}^T \Big(V_t(s_{t+1}) - \widetilde V_{t+1}^\pi (s_{t+1})  \Big) \\
    &= \sum_{t=1}^T \Big(V_{t+1}(s_{t+1}) - \widetilde V_{t+1}^\pi (s_{t+1})  \Big)  + \underbrace{\sum_{t=1}^T\Big( V_t(s_{t+1}) - V_{t+1}(s_{t+1}) \Big)} _{E_T}\\
    & = \sum_{t=0}^{T-1} \Big(V_{t+1}(s_{t+1}) - \widetilde V_{t+1}^\pi (s_{t+1})  \Big) + \Big[ V_{T+1}(s_{T+1}) -\widetilde V_{T+1}^\pi(s_{T+1}) - V_1(s_1) + \widetilde V_1^\pi(s_1) \Big] + E_T \\ 
    & \leq  \sum_{t=1}^{T} \Big(V_{t}(s_{t}) - \widetilde V_{t}^\pi (s_{t})  \Big) + 2\Lambda/(1-\gamma) + E_T.
\end{align*}

Notice that $\{\xi_t\}$ is a martingale difference sequence. Therefore, by Azuma-Hoeffding inequality, we have with probability at least $1-\delta$,
\[
    \sum_{t=1}^T \xi_t \leq    \frac{2 \Lambda}{1-\gamma} \sqrt{T\log\frac{1}{\delta}}.
\]

To summarize, we have 
\begin{align*}
    \widetilde \reg (\pi ) &\leq \sum_{t=1}^{T} \Big(V_{t}(s_{t}) -  \widetilde V_{t}^\pi (s_{t})  \Big) \\
    &\leq 2 \sum_{t=1}^T b_{t_s}(s_t,a_t) +  \frac{2\gamma\Lambda}{1-\gamma} \sqrt{T\log\frac{1}{\delta}} + \gamma   \Big(\sum_{t=1}^{T} \Big(V_{t}(s_{t}) - \widetilde V_{t}^\pi (s_{t})  \Big) + 2\Lambda/(1-\gamma) + E_T \Big),
\end{align*}
which implies
\begin{align*}
    \widetilde \reg (\pi ) \leq \frac{1}{1-\gamma}\Big[ 2 \sum_{t=1}^T b_{t_s}(s_t,a_t) +  \frac{2\gamma \Lambda}{1-\gamma} \sqrt{T\log\frac{1}{\delta}} + \gamma   \Big(2\Lambda/(1-\gamma) + E_T \Big) \Big].
\end{align*}

\end{proof}

\begin{proof}[proof of Lemma~\ref{lem:switching:error}]
On the one hand, we have 
    \begin{align*}
        \frac{\det(\Sigma_T)}{\det(\Sigma_0)} & \geq \prod_{s=1}^{M-1} \frac{\det(\Sigma_{t_{s+1}-1})}{\det(\Sigma_{t_s-1})}> 2^{M-1}.
    \end{align*}
On the other hand, we also have 
\begin{align*}
        \frac{\det(\Sigma_T)}{\det(\Sigma_0)} &= \det(\Sigma_0^{-1} \Sigma_T)  \leq \Big(\frac{\mathrm{Tr}(\Sigma_0^{-1}\Sigma_T)}{d} \Big)^d \\
        & = \Big(\frac{\mathrm{Tr}(\boldsymbol I + \lambda^{-1} \sum_{i,k,t} \phi_{i,k}^t (\phi_{i,k}^t)^\top )}{d} \Big)^d \\
        &\leq \Big( \frac{d +  NKTL^2/\lambda }{d}\Big)^d.
\end{align*}
    Therefore, $M < \frac{1}{\log 2} d \log  \Big( \frac{d +  NKTL^2/\lambda }{d}\Big) + 1$.
\end{proof}

\section{Auxiliary Lemmas}

\begin{lem}\label{lem:optimism0}
    Let $\PP(s'|s,a)$ and $\widehat \PP (s'|s,a)$ be two transition probabilities. Assume that $0 \leq r(s,a) \leq \Lambda$. Let $Q^*$ be optimal Q-function for MDP $\gM_{\PP,r}$. Let $\widehat Q^*$ be the optimal truncated Q-function for $\gM_{\widehat\PP,r+b}$, which satisfies the following equation
    \[
        \widehat Q^*(s,a) = \min\{\Lambda/(1-\gamma), r(s,a)+b(s,a) + \gamma \EE_{s'\sim \PP(\cdot|s,a)} \max_{a'} \widehat Q^*(s',a') \}.
    \]
    Then if 
    \[
        b(s,a) \geq \frac{\gamma\Lambda }{1-\gamma}  \| \PP(\cdot|s,a) - \widehat \PP(\cdot|s,a) \|_1,    
    \]
    we have 
    \[
        \widehat Q^*(s,a) \geq Q^*(s,a)  \text{ for all } (s,a).
    \]
    Furthermore, we have for any $V(s)$ such that $0\leq V(s) \leq \Lambda/(1-\gamma)$, 
    \[
        \gamma| \EE_{s'\sim \PP(s'|s,a)} V(s') - \EE_{s'\sim \widehat \PP(s'|s,a)} V(s') | \leq b(s,a).
    \]
    \end{lem}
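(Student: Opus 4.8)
\emph{Overview.} The plan is to prove the two conclusions in the order stated, since the optimism claim will use the expectation-perturbation bound. Throughout, note that the expectation in the displayed equation for $\widehat Q^*$ should be read against $\widehat\PP$ (matching the model $\gM_{\widehat\PP,r+b}$ and the update in Algorithm~\ref{algo:vit}); I treat it that way.

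\emph{The perturbation bound.} Fix $(s,a)$ and any $V$ with $0\le V(\cdot)\le \Lambda/(1-\gamma)$. Write $\EE_{s'\sim\PP(\cdot|s,a)}V(s') - \EE_{s'\sim\widehat\PP(\cdot|s,a)}V(s') = \sum_{s'}(\PP(s'|s,a)-\widehat\PP(s'|s,a))V(s')$. Because both rows are probability distributions, $\sum_{s'}(\PP(s'|s,a)-\widehat\PP(s'|s,a))=0$, so I may subtract the constant $\Lambda/(2(1-\gamma))$ from $V(s')$ inside the sum without changing it; Hölder's inequality then gives $|\cdot|\le \|\PP(\cdot|s,a)-\widehat\PP(\cdot|s,a)\|_1\cdot \Lambda/(2(1-\gamma))$. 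Multiplying by $\gamma$ and using the hypothesis $b(s,a)\ge \frac{\gamma\Lambda}{1-\gamma}\|\PP(\cdot|s,a)-\widehat\PP(\cdot|s,a)\|_1$ yields the claimed inequality (the centering even buys a spare factor of two, which I will not need).

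\emph{Optimism.} First observe that since $0\le r(s,a)\le\Lambda$, the optimal Q-function of $\gM_{\PP,r}$ obeys $0\le Q^*(s,a)\le \Lambda/(1-\gamma)$, so $V^*(s):=\max_a Q^*(s,a)$ also lies in $[0,\Lambda/(1-\gamma)]$. Introduce the truncated optimistic Bellman operator $(\widehat{\mathcal T}Q)(s,a)=\min\{\Lambda/(1-\gamma),\, r(s,a)+b(s,a)+\gamma\,\EE_{s'\sim\widehat\PP(\cdot|s,a)}\max_{a'}Q(s',a')\}$. As the composition of the standard $\gamma$-contraction $Q\mapsto r+b+\gamma\widehat\PP\max_{a'}Q$ with the $1$-Lipschitz map $x\mapsto\min\{\Lambda/(1-\gamma),x\}$, $\widehat{\mathcal T}$ is a $\gamma$-contraction in sup-norm, hence $\widehat Q^*$ is its unique fixed point and $\widehat{\mathcal T}^nQ_0\to\widehat Q^*$ for any $Q_0$; moreover $\widehat{\mathcal T}$ is monotone. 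The key step is to show the set $\{Q:Q\ge Q^*\text{ pointwise}\}$ is $\widehat{\mathcal T}$-invariant: if $Q\ge Q^*$ then $\max_{a'}Q(\cdot,a')\ge V^*$, so $r(s,a)+b(s,a)+\gamma\EE_{\widehat\PP}\max_{a'}Q \ge r(s,a)+b(s,a)+\gamma\EE_{\widehat\PP}V^* \ge r(s,a)+\gamma\EE_{\PP}V^* = Q^*(s,a)$, where the last inequality applies the perturbation bound with $V=V^*$; and since $\Lambda/(1-\gamma)\ge Q^*(s,a)$ too, the minimum of the two arguments is still $\ge Q^*(s,a)$. Starting the iteration from $Q_0\equiv\Lambda/(1-\gamma)\ge Q^*$ (the initialization in Algorithm~\ref{algo:vit}), induction gives $\widehat{\mathcal T}^nQ_0\ge Q^*$ for all $n$, and passing to the limit yields $\widehat Q^*\ge Q^*$.

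\emph{Main obstacle.} The only genuine subtlety is the truncation at $\Lambda/(1-\gamma)$: one must verify it never undercuts the lower bound $Q^*$, which is exactly where the a priori estimate $Q^*\le\Lambda/(1-\gamma)$ enters, and one must apply the perturbation bound to the \emph{unnormalized} value $V^*$, whose magnitude is of order $\Lambda/(1-\gamma)$ rather than $O(1)$ — precisely the regime covered by the ``furthermore'' clause. The rest is the routine contraction-and-monotonicity machinery for the Bellman optimality operator.
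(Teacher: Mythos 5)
Your proof is correct. Note that the paper itself states Lemma~\ref{lem:optimism0} in the auxiliary section without supplying a proof, so there is no in-paper argument to compare against line by line; what you have written is exactly the standard argument the paper implicitly relies on, and it fills the gap cleanly. You correctly read the expectation in the fixed-point equation as being over $\widehat\PP$ (the displayed equation in the statement has a typo, writing $\PP$ where the model $\gM_{\widehat\PP,r+b}$ and Algorithm~\ref{algo:vit} clearly intend $\widehat\PP$ --- taken literally the claim would be trivial and the perturbation hypothesis unused). The two substantive points are both handled: the perturbation bound via H\"older with $\|V\|_\infty\leq\Lambda/(1-\gamma)$ (your centering at $\Lambda/(2(1-\gamma))$ gains an unneeded factor of two, which is harmless), and the verification that the truncation at $\Lambda/(1-\gamma)$ cannot undercut $Q^*$, which uses the a priori bound $Q^*\leq\Lambda/(1-\gamma)$ from $0\leq r\leq\Lambda$. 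The monotone-contraction induction from the initialization $Q_0\equiv\Lambda/(1-\gamma)$, with the invariance of $\{Q: Q\geq Q^*\}$ established through $V^*$, is the right machinery and is carried out without error.
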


\begin{lem}[Factorization]\label{lem:factor}
If $\displaystyle \PP(\cdot|s,a) = \otimes _{i=1}^N  \PP_{i }(\cdot|s,a)$, $\widehat \PP(\cdot|s,a) = \otimes _{i=1}^N  \widehat\PP_{i }(\cdot|s,a)$, then 
\[
    \| \PP(\cdot|s,a) - \widehat \PP(\cdot|s,a) \|_1 \leq  \sum _{i=1}^N  \| \PP_{i }(\cdot|s,a) - \widehat \PP_{i }(\cdot|s,a) \|_1.
\]
\end{lem}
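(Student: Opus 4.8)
The plan is to fix the state--action pair $(s,a)$ (everything below is conditional on it, and I suppress it from the notation), write $P=\bigotimes_{i=1}^N P_i$ and $Q=\bigotimes_{i=1}^N Q_i$ for the two product distributions on the product space $\prod_{i=1}^N \mathcal{X}_i$, and establish the bound by a standard hybrid (telescoping) argument over the coordinates.

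First I would introduce the interpolating distributions
\[
    R_j \;:=\; \Big(\bigotimes_{i\le j} P_i\Big)\otimes\Big(\bigotimes_{i>j} Q_i\Big), \qquad j=0,1,\dots,N,
\]
so that $R_0 = Q$ and $R_N = P$. By the triangle inequality for the $\ell_1$ norm,
\[
    \|P-Q\|_1 \;=\; \|R_N - R_0\|_1 \;\le\; \sum_{j=1}^N \|R_j - R_{j-1}\|_1 .
\]
Next I would observe that $R_j$ and $R_{j-1}$ agree in every coordinate except the $j$-th (both carry $P_i$ for $i<j$ and $Q_i$ for $i>j$), so that as finite signed measures
\[
    R_j - R_{j-1} \;=\; \Big(\bigotimes_{i<j} P_i\Big)\otimes \big(P_j - Q_j\big)\otimes\Big(\bigotimes_{i>j} Q_i\Big).
\]

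Then I would invoke the multiplicativity of the $\ell_1$ norm under outer products, $\|\mu\otimes\nu\|_1 = \|\mu\|_1\,\|\nu\|_1$, which is immediate from $\sum_{x,y}|\mu(x)\nu(y)| = \big(\sum_x|\mu(x)|\big)\big(\sum_y|\nu(y)|\big)$. Since $\bigotimes_{i<j}P_i$ and $\bigotimes_{i>j}Q_i$ are probability distributions, their $\ell_1$ norms equal $1$, hence $\|R_j - R_{j-1}\|_1 = \|P_j - Q_j\|_1$. Substituting into the telescoping bound and restoring the conditioning on $(s,a)$ yields exactly the claimed inequality.

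There is essentially no obstacle here; the only points needing a little care are the multiplicativity of the $\ell_1$ norm under tensor products and the bookkeeping that the ``frozen'' factors in each hybrid are genuine probability measures, so they each contribute a factor of $1$ rather than something larger. An equivalent route, if preferred, is to couple $P$ and $Q$ by the product of the optimal couplings of the pairs $(P_i,Q_i)$ and bound the probability that the two samples disagree by a union bound over coordinates, using $\|P_i-Q_i\|_1 = 2\,d_{\mathrm{TV}}(P_i,Q_i)$; either argument is short and self-contained, and I would present the hybrid one.
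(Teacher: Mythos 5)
Your proof is correct. The paper states Lemma~\ref{lem:factor} as an unproved auxiliary lemma, so there is no argument of its own to compare against; your hybrid (telescoping) decomposition, combined with the multiplicativity of the $\ell_1$ norm under tensor products of signed measures and the observation that the frozen factors are probability measures of unit $\ell_1$ mass, is the standard and complete way to establish exactly this bound.
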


\begin{lem}[Lemma 11 in \citep{abbasi2011improved}]\label{lem:epl}
For any $\{x_t\}_{t=1}^T \subseteq \RR^d$, let $\Sigma_t = \lambda \boldsymbol I + \sum_{t=1}^T x_t x_t^\top$, then we have 
\[
    \sum_{t=1}^T (1 \wedge \|x_t\|^2_{\Sigma_{t-1}^{-1}})  \leq 2d\log\frac{d\lambda+TL^2}{d\lambda},
\]
where $L = \sup \|x_t\|_2$.
\end{lem}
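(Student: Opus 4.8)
The plan is to prove this via the standard log-determinant telescoping argument. The key identity to establish first is the rank-one determinant update. Writing $\Sigma_t = \Sigma_{t-1} + x_t x_t^\top$, the matrix determinant lemma gives
\[
    \det(\Sigma_t) = \det(\Sigma_{t-1})\bigl(1 + \|x_t\|_{\Sigma_{t-1}^{-1}}^2\bigr),
\]
so that $1 + \|x_t\|_{\Sigma_{t-1}^{-1}}^2 = \det(\Sigma_t)/\det(\Sigma_{t-1})$. This converts each per-step quadratic form into a ratio of consecutive determinants, which is the crucial reduction and the reason the bound ends up logarithmic in $T$.

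Next I would pass from the truncated quantity $1 \wedge \|x_t\|_{\Sigma_{t-1}^{-1}}^2$ to a logarithm using the elementary scalar inequality $\min(1, u) \leq 2\log(1+u)$, valid for all $u \geq 0$. One checks this by splitting at $u=1$: on $[0,1]$ the function $2\log(1+u)-u$ vanishes at $0$ and has nonnegative derivative $(1-u)/(1+u)$, while for $u \geq 1$ one has $2\log(1+u) \geq 2\log 2 > 1$. Applying this with $u = \|x_t\|_{\Sigma_{t-1}^{-1}}^2$ and substituting the determinant identity yields
\[
    1 \wedge \|x_t\|_{\Sigma_{t-1}^{-1}}^2 \leq 2\log\frac{\det(\Sigma_t)}{\det(\Sigma_{t-1})}.
\]
Summing over $t$ makes the right-hand side telescope, so that $\sum_{t=1}^T (1 \wedge \|x_t\|_{\Sigma_{t-1}^{-1}}^2) \leq 2\log\bigl(\det(\Sigma_T)/\det(\Sigma_0)\bigr)$, where $\Sigma_0 = \lambda \boldsymbol I$.

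Finally I would bound the terminal determinant. Since $\det(\Sigma_0) = \lambda^d$ and, by the AM--GM inequality applied to the (nonnegative) eigenvalues of the positive-definite matrix $\Sigma_T$,
\[
    \det(\Sigma_T) \leq \Big(\frac{\Tr(\Sigma_T)}{d}\Big)^d = \Big(\frac{d\lambda + \sum_{\tau=1}^T \|x_\tau\|_2^2}{d}\Big)^d \leq \Big(\frac{d\lambda + TL^2}{d}\Big)^d,
\]
using $\|x_\tau\|_2 \leq L$, the claimed bound $2d\log\frac{d\lambda + TL^2}{d\lambda}$ follows by taking logarithms.

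As for difficulty, this argument is essentially routine and there is no genuine obstacle. The one place demanding care is the scalar inequality $\min(1,u) \leq 2\log(1+u)$: the naive bound $u \leq \log(1+u)$ is \emph{false} (in fact $\log(1+u) \leq u$ always), and it is precisely the factor of $2$, rather than $1$, that absorbs the truncation at $u=1$. Getting this constant right is exactly what makes the clean $2d\log(\cdot)$ bound come out, so I would be careful to justify that step rather than wave it through.
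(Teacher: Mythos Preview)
Your argument is correct and is exactly the standard proof of the elliptical potential lemma. The paper does not give its own proof of this statement; it simply cites it as Lemma~11 of \citet{abbasi2011improved}, whose proof proceeds via the same determinant-telescoping route you outline.
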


\begin{lem}[Lemma 12 in \citep{abbasi2011improved}] \label{lem:det}
Let $A,B \in \RR^{d\times d}$ be two positive definite matrices and $A \succeq B$. Then for any $x \in \RR^d$, we have 
\[
    \|x\|_A^2 \leq \|x\|_B^2 \cdot \frac{\det(A)}{\det(B)}.    
\]
\end{lem}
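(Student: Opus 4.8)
The plan is to reduce the inequality to a one-matrix spectral bound by a whitening (congruence) transformation, and then finish with an elementary estimate on eigenvalues.

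First I would use that $B\succ 0$ to form the symmetric positive definite matrices $B^{1/2}$ and $B^{-1/2}$, set $C:=B^{-1/2}AB^{-1/2}$, and, for the fixed vector $x$, put $y:=B^{1/2}x$. Then $C$ is symmetric positive definite, $\|x\|_A^2=x^\top Ax=y^\top Cy$, $\|x\|_B^2=x^\top Bx=\|y\|_2^2$, and $\det(A)/\det(B)=\det(B^{-1/2}AB^{-1/2})=\det(C)$. The hypothesis $A\succeq B$ is preserved under congruence by $B^{-1/2}$, since $C-I=B^{-1/2}(A-B)B^{-1/2}\succeq 0$; hence $C\succeq I$.

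Next, let $\lambda_1\ge\cdots\ge\lambda_d$ be the eigenvalues of $C$, all of which are $\ge 1$ because $C\succeq I$. The Rayleigh quotient bound gives $y^\top Cy\le\lambda_1\|y\|_2^2$, while $\lambda_1\le\prod_{i=1}^d\lambda_i=\det(C)$ since every factor is at least $1$. Chaining these, $\|x\|_A^2=y^\top Cy\le\lambda_1\|y\|_2^2\le\det(C)\,\|y\|_2^2=\frac{\det(A)}{\det(B)}\|x\|_B^2$, which is exactly the claim. An equivalent route avoids square roots via the generalized eigendecomposition of the pencil $(A,B)$: there is an invertible $P$ with $P^\top BP=I$ and $P^\top AP=\mathrm{diag}(d_1,\dots,d_d)$, each $d_i\ge 1$ because $A\succeq B$; substituting $y=P^{-1}x$ yields the same chain of inequalities, and $\det(A)/\det(B)=\prod_i d_i$ falls out of $\det(P)^{-2}$ cancelling.

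I do not expect a genuine obstacle here: this is an elementary linear-algebra fact, quoted in the paper from \citep{abbasi2011improved}. The only points that need a little care are that $B\succ 0$ is precisely what licenses forming $B^{\pm 1/2}$ and dividing by $\det(B)$, and that the Loewner order is preserved under congruence, which is what turns $A\succeq B$ into $C\succeq I$; both are standard, so one could equally well simply cite the source.
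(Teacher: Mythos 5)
Your proof is correct. The paper itself does not prove this lemma --- it simply cites Lemma 12 of \citep{abbasi2011improved} --- and your whitening argument (reduce to $C=B^{-1/2}AB^{-1/2}\succeq I$, then bound $\lambda_{\max}(C)\leq\prod_i\lambda_i(C)=\det(C)$ since all eigenvalues are at least $1$) is exactly the argument given in that reference, so nothing further is needed.
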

\section{Additional Experiments}

\begin{figure}
    \centering
   \includegraphics[width=1\linewidth]{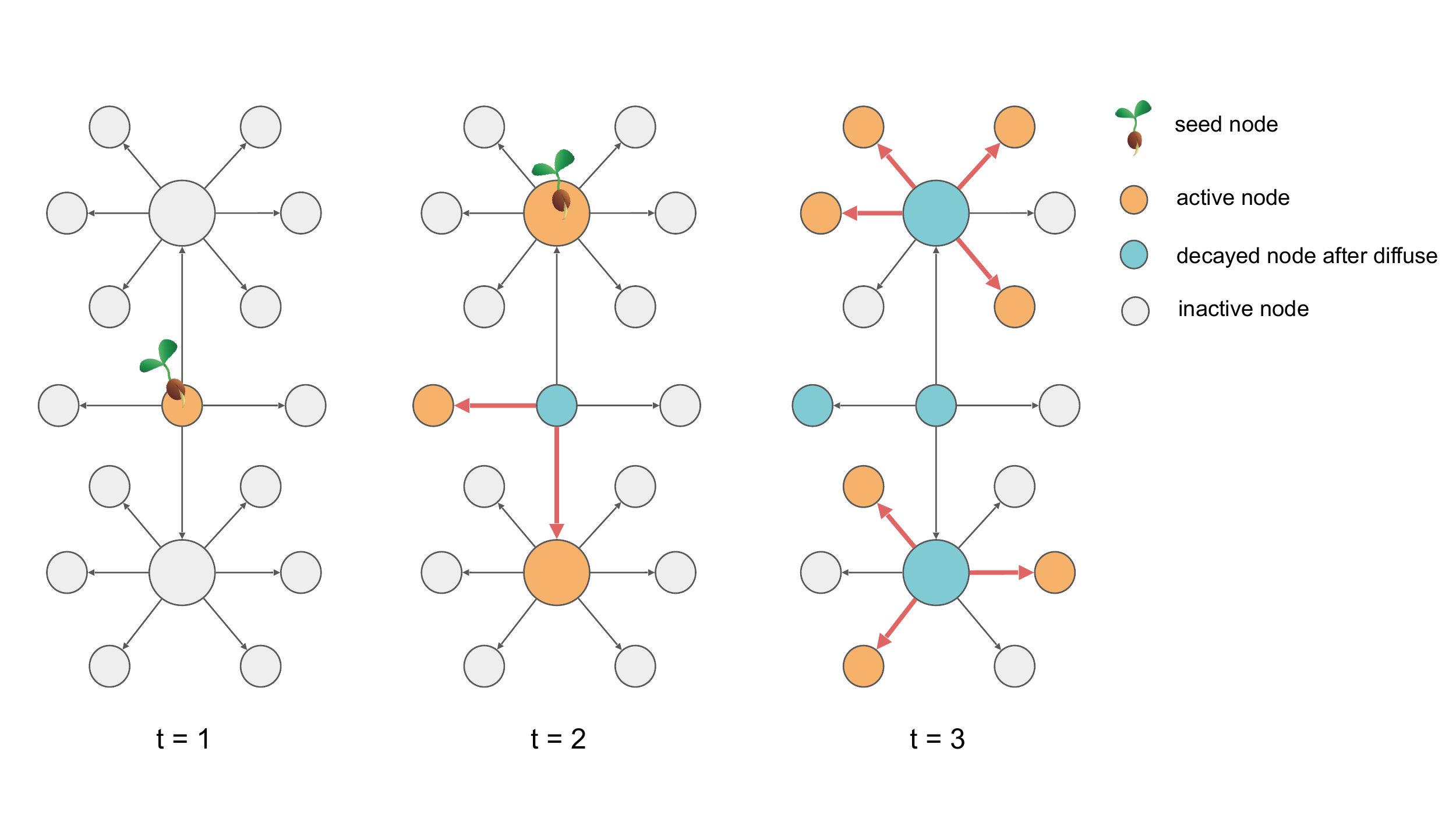}
    \vspace{-2mm}
    \caption{\textbf{Star-shape network.} The Synthetic network has three influential nodes; the center one can activate the other two. Only edges with positive probability are visualized. } 
\centering
\label{fig-bowtiegraph}
\end{figure}

We conduct additional experiments on another synthetic network as shown in Figure \ref{fig-bowtiegraph}, where a better action has more delayed reward and making decision at each time step is appreciated. This synthetic network has three influential nodes, while the center one is the best choice and has the ability to activate the other two influential nodes. Also the central influential node has delayed but higher expected reward than expected reward by activating either of other two neighboring influential nodes. For simplicity, we set only one content is available here, $K=1$ and $d_2=1$. The $d_1=6$ user feature vector has only one positive entry with value one, indicating its neighborhood subgraph. The discount factor of reward is $\gamma=0.9$.

In Figure \ref{fig-bowtiereward}, we compare the performance of our \textsc{MORIMA} to \textsc{MORIMA} with known $\Ab^k$ as upper bound and IMLinUCB as baseline. The details of these algorithms are the same as in section \ref{sec-experiment}. \textsc{MORIMA} exhibits its great power to explore the unknown graph efficiently; its learning curve at the first 40 time steps overlaps with \textsc{MORIMA} while knowing the true dynamics $\Ab^k$. In addition, the sum of cumulative rewards of \textsc{MORIMA} and the upper bound stay at the same high level. Furthermore, we observe that the learning procedure of IMLinUCB takes much longer and converges to a much lower level. The classic IM setting, activating $\textit{k}$ seeds at once for every $\textit{k}$ step, shows its limit while adaptive decision making leads to a better result.

We ran all experiments on our internal cluster with 8 CPUs, 128G memory per task.

\begin{figure}
    \centering
   \includegraphics[width=1\linewidth]{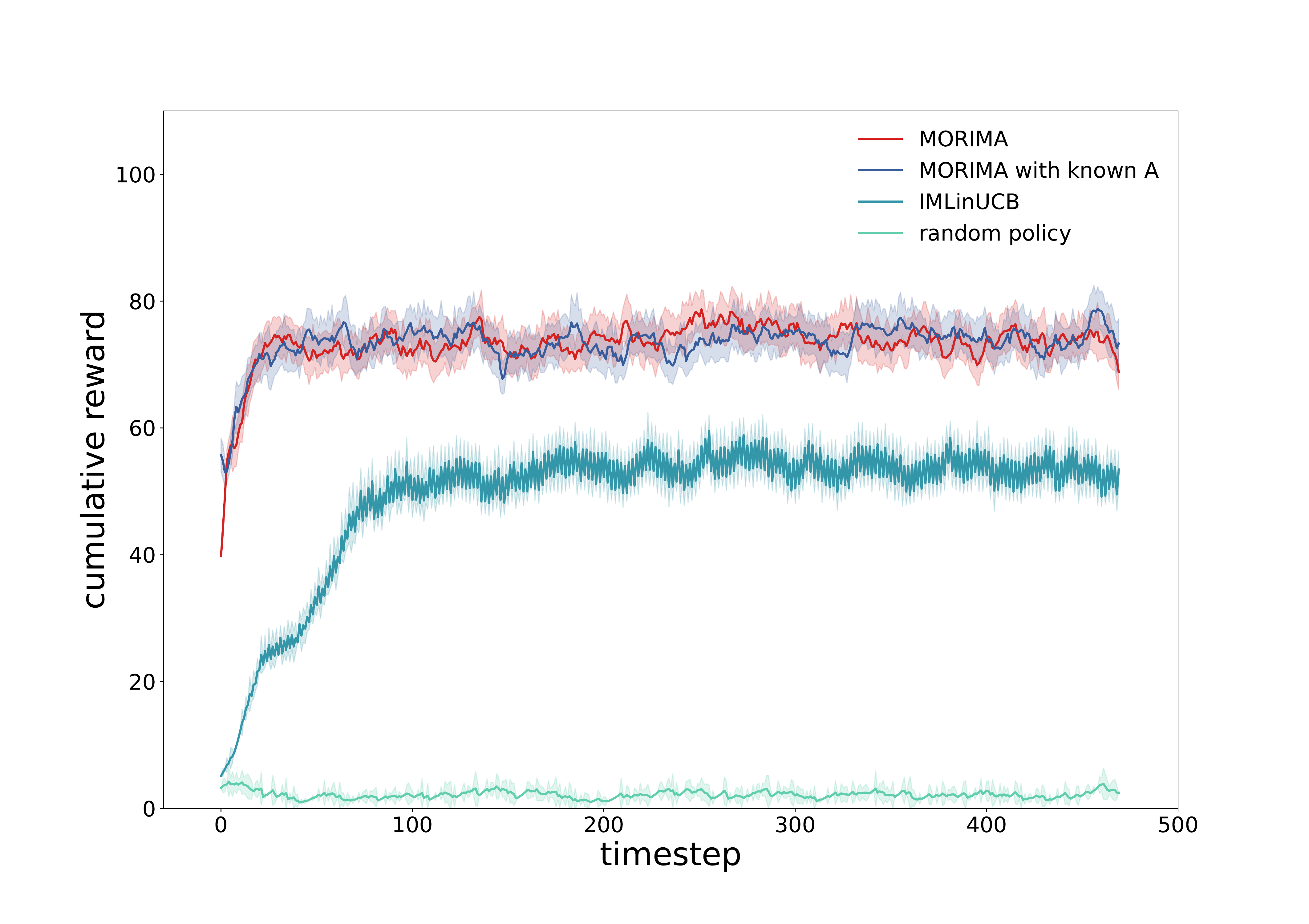}
    \vspace{-2mm}
    \caption{\textbf{Real-time discounted sum of rewards.} The Synthetic network (Fig.\ref{fig-bowtiegraph}) has static underlying dynamics $\gT^*$ generating diffusion probabilities from each node to all other nodes. Each line is averaged from 20 trajectories; 85\% CI bands are included.} 
\centering
\label{fig-bowtiereward}
\end{figure}


\section{Extension to Generalized Linear Model} \label{sec:generalized}

In this section, we show how to extend our algorithm and regret bound to generalized linear models. We first state below the modified assumption on the transition model.

\begin{assumption}[Generalized Bernoulli Independent Cascade Model\label{assump:generalized_bernoulli}] 
Let $s'$ be the next state. For each $k\in[K]$, we assume there is an underlying connectivity matrix $\Ab^k \in \mathbb{R}^{n \times n}$ such that
\begin{equation}
        \mathbb{P}(s_{i,k}' = 1|s) = \mu(\sum_{j} \Ab^k_{i,j} s_{j,k}),
    \label{eq:generalized-user-content-transition}
\end{equation}
where $\mu: \RR \to \RR$ satisfies $\mu(0)=0$ and $1/\kappa \leq \mu' \leq 1 $ for some $\kappa \geq 1$. And we assume $s_{i,k}'$'s are independent conditioned on $s$.
\end{assumption}

\subsection{\textsc{MORIMA} for Generalized Linear Model}
Next, we state the changes to our algorithm. Under this assumption, we cannot simply use ridge regression to get an empirical estimation of $\gT^*$. Instead, we estimate the tensor model by
\begin{equation}
   \widehat {\gT}_t = \argmin_{\gT} L_t(\gT) \equiv \Big \|  \lambda \gT  + \sum_{\tau=1}^{t-1} \sum_{k=1}^K \sum_{i=1}^N \big[ \mu (\langle \gT, \phi_{i,k}^\tau \rangle )  - (s_{\tau+1})_{i,k} \big] \phi_{i,k}^\tau \Big \|_{\Sigma_{t-1}^{-1}}, \label{eq:gen-That}
\end{equation}
where $\phi_{i,k}^t$ and $\Sigma_{t}$ are defined as in Section~\ref{sec:algorithm}. 

We still perform optimistic planning with respect to the truncated-reward model, where the reward bonus term $b_t(s,a)$ is replaced with 
\begin{equation}
    b_t(s,a) = \frac{4 \kappa \gamma \Lambda}{1-\gamma} \sum_{i=1}^N \sum_{k=1}^K (1 \wedge \beta_t \cdot \| \phi_{i,k} (s,a) \|_{\Sigma_{t-1}^{-1}} ) ,
    \label{eq:gen-bonus}
\end{equation}
which is the original bonus term multiplied by $2\kappa$. We adopt the same slow switching method as before.

\subsection{Regret Analysis}
We have the following regret bound, which is the original bound multiplied by $\kappa$.
\begin{theorem}\label{thm:gen-main}
    Let Assumption~\ref{assump:generalized_bernoulli}, Assumptions~\ref{assump:2}-\ref{decay} hold. With probability at least $1-\delta$, Algorithm~\ref{algo:1} satisfies the following regret upper bound:
    \[
            \reg (T) \leq \widetilde{\gO} \Big( \frac{\kappa}{\Delta ^2(1-\gamma)^2} \cdot \big(d\sqrt{C }/\Delta + \sqrt{dNK} \big) \cdot \sqrt{T}  \Big) + \mathrm{polylog}(T)\text{-}\mathrm{terms}, 
    \]
    where $d = \mathrm{dim} (\gT^*) = d_1^2d_2$.
\end{theorem}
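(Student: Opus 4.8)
The plan is to re-run the proof of Theorem~\ref{thm:main} essentially verbatim, replacing only the three ingredients into which the link $\mu$ actually enters: the state-norm bound of Lemma~\ref{lem:hpbound:user} (and the variance estimate it feeds), the confidence region of Lemma~\ref{lem:conf}, and the optimism guarantee of Lemma~\ref{lem:bonus}. Everything downstream --- the surrogate-regret reduction (Lemma~\ref{lem:sg}), the regret decomposition (Lemma~\ref{lem:reg:decomp}), the switch-count bound (Lemma~\ref{lem:switching:error}), and the elliptical-potential step --- is model-agnostic once these three are supplied, so the final accounting carries over up to constants.

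First I would note that $\mu(0)=0$ and $\mu'\le 1$ give $0\le\mu(x)\le x$ for $x\ge0$. Hence $\E[(s_{t+1})_{i,k}\mid s_t,a_t]=\mu\big(\sum_j\Ab^k_{i,j}(s_{ta_t})_{j,k}\big)\le\sum_j\Ab^k_{i,j}(s_{ta_t})_{j,k}$, so the recursion $\E[\|s_{t+1}\|_1\mid s_t,a_t]\le(1-\Delta)(\|s_t\|_1+1)$ still holds, and since the Bernstein step in that proof only uses ``variance $\le$ mean'' for Bernoulli variables, Lemma~\ref{lem:hpbound:user} goes through word for word; the same computation gives $\mathrm{var}[(s_{t+1})_{i,k}\mid s_t,a_t]\le\E[(s_{t+1})_{i,k}\mid s_t,a_t]\le\frac{C}{NK}(\|s_t\|_1+1)\le\widetilde\gO(\frac{C}{NK\Delta^2})$ on the good event. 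For the confidence region, observe that the estimator in Eq.~\eqref{eq:gen-That} is the unique zero of the regularized estimating map $G_t(\gT):=\lambda\gT+\sum_{\tau,i,k}\big[\mu(\langle\gT,\phi^\tau_{i,k}\rangle)-(s_{\tau+1})_{i,k}\big]\phi^\tau_{i,k}$, well defined since $G_t$ has Jacobian $\lambda I+\sum\mu'\phi\phi^\top\succ0$ and is therefore a bijection of $\R^d$. Writing $\epsilon^\tau_{i,k}=(s_{\tau+1})_{i,k}-\mu(\langle\gT^*,\phi^\tau_{i,k}\rangle)$ and expanding $G_t(\widehat\gT_t)-G_t(\gT^*)$ by the mean value theorem yields $M_t(\widehat\gT_t-\gT^*)=\sum_{\tau,i,k}\epsilon^\tau_{i,k}\phi^\tau_{i,k}-\lambda\gT^*$, where $M_t=\lambda I+\sum\mu'(\tilde\xi^\tau_{i,k})\phi^\tau_{i,k}(\phi^\tau_{i,k})^\top$ satisfies $\frac1\kappa\Sigma_{t-1}\preceq M_t\preceq\Sigma_{t-1}$. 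Sandwiching the $\Sigma_{t-1}$- and $M_t$-norms through these inclusions gives $\|\widehat\gT_t-\gT^*\|_{\Sigma_{t-1}}\le\kappa\big(\big\|\sum\epsilon\phi\big\|_{\Sigma_{t-1}^{-1}}+\sqrt\lambda\|\gT^*\|_2\big)$, and the batched Bernstein self-normalized bound (Lemma~\ref{lem:self-normed}), with the variance proxy just obtained, bounds $\big\|\sum\epsilon\phi\big\|_{\Sigma_{t-1}^{-1}}$ by $\beta_t-\sqrt\lambda\|\gT^*\|_2$ for precisely the $\beta_t$ of Lemma~\ref{lem:conf}. Thus, with probability at least $1-\delta$, $\|\widehat\gT_t-\gT^*\|_{\Sigma_{t-1}}\le\kappa\beta_t$ for all $t$.

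For optimism, each $\PP_{i,k}(\cdot\mid s,a)$ is Bernoulli with mean $\mu(\langle\cdot,\phi_{i,k}(s,a)\rangle)$, so, $\mu$ being $1$-Lipschitz and valued in $[0,1]$ on the relevant range, $\|\PP_{i,k}-\widehat\PP_{i,k}\|_1=2|\mu(\langle\gT^*,\phi_{i,k}\rangle)-\mu(\langle\widehat\gT_t,\phi_{i,k}\rangle)|\le 2\big(1\wedge|\langle\gT^*-\widehat\gT_t,\phi_{i,k}\rangle|\big)\le 2\big(1\wedge\kappa\beta_t\|\phi_{i,k}\|_{\Sigma_{t-1}^{-1}}\big)\le 2\kappa\big(1\wedge\beta_t\|\phi_{i,k}\|_{\Sigma_{t-1}^{-1}}\big)$, the last step using $1\wedge\kappa y\le\kappa(1\wedge y)$ for $\kappa\ge1$ --- this is the step that keeps the extra $\kappa$ from the confidence radius as a single multiplicative factor instead of letting it compound or enter $\beta_t$. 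Factorization (Lemma~\ref{lem:factor}) then gives $\|\PP(\cdot\mid s,a)-\widehat\PP(\cdot\mid s,a)\|_1\le 2\kappa\sum_{i,k}\big(1\wedge\beta_t\|\phi_{i,k}(s,a)\|_{\Sigma_{t-1}^{-1}}\big)$, so the bonus of Eq.~\eqref{eq:gen-bonus}, being $\frac{4\kappa\gamma\Lambda}{1-\gamma}\sum_{i,k}\big(1\wedge\beta_t\|\phi_{i,k}\|_{\Sigma_{t-1}^{-1}}\big)\ge\frac{\gamma\Lambda}{1-\gamma}\|\PP(\cdot\mid s,a)-\widehat\PP(\cdot\mid s,a)\|_1$, satisfies the hypothesis of Lemma~\ref{lem:optimism0}. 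Hence $\widetilde Q^*(s,a)\le Q_t(s,a)$ for all $t$, together with the one-step value-transfer inequality $\gamma|\E_{s'\sim\PP}V(s')-\E_{s'\sim\widehat\PP}V(s')|\le b_t(s,a)$, which is all that Lemma~\ref{lem:bonus} is used for.

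With these substitutions the proof of Theorem~\ref{thm:main} proceeds identically. In particular $b_{t_s}$ is now exactly $2\kappa$ times its linear counterpart (same $\beta_t$, same truncated features), so the dominant term $\frac{1}{1-\gamma}\sum_{t=1}^T b_{t_s}(s_t,a_t)$ is $2\kappa$ times the one in the linear analysis, bounded through the same elliptical-potential estimate on $\sum_{t,i,k}\big(1\wedge\|\phi_{i,k}(s_t,a_t)\|_{\Sigma_{t_s-1}^{-1}}\big)$; the switching error and the $\sqrt{T\log(1/\delta)}$ martingale term are unchanged, yielding $\reg(T)\le\widetilde\gO\big(\frac{\kappa}{\Delta^2(1-\gamma)^2}(d\sqrt C/\Delta+\sqrt{dNK})\sqrt T\big)$ up to $\mathrm{polylog}(T)$ terms. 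I expect the confidence-region step to be the main obstacle: obtaining the tight $\kappa$ dependence requires the $M_t$-sandwich to be carried out carefully so that exactly one factor of $\kappa$ emerges, and one must then verify that the bonus derivation does not multiply in a second $\kappa$ --- which is precisely what the inequality $1\wedge\kappa y\le\kappa(1\wedge y)$ secures.
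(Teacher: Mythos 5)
Your proposal is correct and follows essentially the same route as the paper: the state-norm bound carries over via $\mu(z)\le z$, the confidence region is obtained from the estimating map $G_t$ plus the mean-value-theorem sandwich $\tfrac1\kappa\Sigma_{t-1}\preceq M_t\preceq\Sigma_{t-1}$, optimism picks up a single factor of $\kappa$, and the downstream regret accounting is unchanged. The only (immaterial) difference is in the confidence step: you treat $\widehat{\gT}_t$ as the exact zero of $G_t$ and get radius $\kappa\beta_t$, whereas the paper only uses $L_t(\widehat{\gT}_t)\le L_t(\gT^*)\le\beta_t$ by optimality of the argmin and a triangle inequality, yielding the slightly looser but more robust radius $2\kappa\beta_t$ that matches the $4\kappa$ constant in the stated bonus.
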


\subsection{Proof Sketch of Theorem~\ref{thm:gen-main}}

The proof of Theorem~\ref{thm:gen-main} only differs from the proof of Theorem~\ref{thm:main} slightly. Next we examine through the proof of Theorem~\ref{thm:main} and state the corresponding lemmas in the generalized linear model setting.

First, we have exactly the same result for the high probability bounds for the number of active user-content pairs.
\begin{lem}[High probability bounds for the number of active user-content pairs]\label{lem:gen-hpbound:user}
    Let Assumption~\ref{assump:generalized_bernoulli}, Assumptions~\ref{assump:2}-\ref{decay} hold. For any possibly non-stationary policy $\pi$, with probability at least $1-\delta$, we have for all $t\geq 1$, 
    \[
        \|s_t\|_1 \leq \frac{2}{\Delta}(\frac{2}{\Delta}\log \frac{2 t^2}{ \delta} + 1).
    \]
\end{lem}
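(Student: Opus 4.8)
The plan is to mimic the proof of Lemma~\ref{lem:hpbound:user} essentially verbatim, with the single modification needed to handle the link function $\mu$. First I would observe that since $\mu(0)=0$ and $\mu'\le 1$ everywhere, we have $\mu(x)\le x$ for every $x\ge 0$. Applying this to the nonnegative quantity $\sum_j \Ab^k_{i,j}(s_{ta_t})_{j,k}$ gives $\E[(s_{t+1})_{i,k}\mid s_t,a_t]=\mu\big(\sum_j \Ab^k_{i,j}(s_{ta_t})_{j,k}\big)\le \sum_j \Ab^k_{i,j}(s_{ta_t})_{j,k}$, which is exactly the upper bound on the conditional mean that was used in the linear case. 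This is the only point at which the transition model enters the argument.

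Next I would sum over $i,k$ and invoke the diffusion-decay assumption $\sum_i \Ab^k_{i,j}\le 1-\Delta$ to get $\E[\|s_{t+1}\|_1\mid s_t,a_t]\le (1-\Delta)\|s_{ta_t}\|_1\le(1-\Delta)(\|s_t\|_1+1)$, the last step because an action flips at most one coordinate of the state. Then, conditioned on $(s_t,a_t)$, the quantity $\|s_{t+1}\|_1$ is a sum of $NK$ independent Bernoulli variables (Assumption~\ref{assump:generalized_bernoulli} still posits conditional independence) whose variances are bounded by their means; Bernstein's inequality with failure probability $\delta_t=\delta/(2t^2)$, followed by the same AM--GM step $2\sqrt{(1-\Delta)(\|s_t\|_1+1)\log(1/\delta_t)}\le \tfrac{\Delta}{2}(\|s_t\|_1+1)+\tfrac{2}{\Delta}(1-\Delta)\log(1/\delta_t)$, yields the recursion $\|s_{t+1}\|_1\le(1-\Delta/2)(\|s_t\|_1+1)+\tfrac{2}{\Delta}\log(1/\delta_t)$. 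Solving this recursion and taking a union bound over all $t\ge 1$ gives the claimed bound $\|s_t\|_1\le \tfrac{2}{\Delta}\big(\tfrac{2}{\Delta}\log\tfrac{2t^2}{\delta}+1\big)$.

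There is essentially no new obstacle: the contraction $\mu(x)\le x$ for $x\ge 0$ recovers precisely the inequality that drove the original proof, and the remaining ingredients — Bernstein, AM--GM, solving the linear recursion, union bound — depend only on the Bernoulli/independence structure and the decay assumption, not on linearity of the model. If anything, the one place to be slightly careful is to note that $\mu' \le 1$ is being used only on the nonnegative half-line where the argument $\sum_j \Ab^k_{i,j}(s_{ta_t})_{j,k}$ lives, so the one-sided bound $\mu(x)\le x$ suffices and the lower bound $\mu'\ge 1/\kappa$ (which matters for estimation but not here) is not needed for this lemma.
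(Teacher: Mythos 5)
Your proposal is correct and matches the paper's proof exactly: the paper likewise notes that $\mu(0)=0$ and $\mu'\le 1$ give $\mu(z)\le z$ for $z\ge 0$, bounds the conditional mean by $\sum_j \Ab^k_{i,j}(s_{ta_t})_{j,k}$, and then invokes the proof of Lemma~\ref{lem:hpbound:user} verbatim. No differences worth noting.
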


The next two lemmas justify the choice of the bonus term.

\begin{lem}[Confidence Region]\label{lem:gen-conf}
    Let Assumption~\ref{assump:generalized_bernoulli}, Assumptions~\ref{assump:2}-\ref{decay} hold. With probability at least $1-\delta$, we have for all $t \geq 1$, 
   \[
       L_t(\gT^*) \leq \beta_t,
   \]
   where $\beta_t$ is defined as in Eqn.~\eqref{eq:beta:t}.
\end{lem}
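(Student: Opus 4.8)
The plan is to reduce $L_t(\gT^*)$ to a pure self-normalized martingale sum and then recycle the argument of Lemma~\ref{lem:conf} almost verbatim. First I would introduce the centered noise $\eta_{i,k}^\tau := (s_{\tau+1})_{i,k} - \mu(\langle \gT^*, \phi_{i,k}^\tau\rangle)$; by Assumption~\ref{assump:generalized_bernoulli} together with Assumption~\ref{assump:2} its conditional mean vanishes, it satisfies $|\eta_{i,k}^\tau|\le 1$, and the $\eta_{i,k}^\tau$ are conditionally independent across $(i,k)$ since $s_{\tau+1}$ is a conditional product of Bernoullis. Plugging $\gT=\gT^*$ into the definition~\eqref{eq:gen-That} of $L_t$ and using the triangle inequality for $\|\cdot\|_{\Sigma_{t-1}^{-1}}$ together with $\Sigma_{t-1}\succeq\lambda I$, I would get
\[
L_t(\gT^*) = \Big\| \lambda\gT^* - \sum_{\tau=1}^{t-1}\sum_{k=1}^{K}\sum_{i=1}^{N}\eta_{i,k}^\tau\,\phi_{i,k}^\tau \Big\|_{\Sigma_{t-1}^{-1}} \le \sqrt{\lambda}\,\|\gT^*\|_2 + \Big\| \sum_{\tau=1}^{t-1}\sum_{k=1}^{K}\sum_{i=1}^{N}\eta_{i,k}^\tau\,\phi_{i,k}^\tau \Big\|_{\Sigma_{t-1}^{-1}},
\]
so it remains to bound the residual sum by $\beta_t-\sqrt{\lambda}\|\gT^*\|_2$.

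Next I would control the conditional variance of the noise exactly as in the proof of Lemma~\ref{lem:conf}. Because $\mu(0)=0$ and $\mu'\le 1$, one has $0\le\mu(x)\le x$ for $x\ge 0$, hence $\mathrm{var}[(s_{\tau+1})_{i,k}\mid s_\tau,a_\tau]\le\EE[(s_{\tau+1})_{i,k}\mid s_\tau,a_\tau]=\mu\big(\sum_j \Ab^k_{i,j}(s_{\tau a_\tau})_{j,k}\big)\le\sum_j \Ab^k_{i,j}(s_{\tau a_\tau})_{j,k}\le\frac{C}{NK}(\|s_\tau\|_1+1)$ by Assumption~\ref{assump:utpub}. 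Then Lemma~\ref{lem:gen-hpbound:user} bounds $\|s_\tau\|_1$ with probability at least $1-\delta/2$, which yields a uniform variance proxy $\sigma^2 = \frac{C}{NK}(3/\Delta)^2\log\frac{4t^2}{\delta}$ on that event. With this in hand I would invoke the batched Bernstein-type self-normalized bound of Lemma~\ref{lem:self-normed} applied to $\{(\phi_{i,k}^\tau,\eta_{i,k}^\tau)\}$ with $m=NK$, $R=1$, linear parameter set to $0$ (so the hypothesis $\EE[\eta_{i,k}^\tau\mid\gF_\tau]=\langle 0,\phi_{i,k}^\tau\rangle$ holds), and the above $\sigma^2$; note that only boundedness, conditional independence and the variance bound enter, so the link function $\mu$ is irrelevant at this step. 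This gives, uniformly in $t\geq1$ and with probability at least $1-\delta/2$,
\[
\Big\| \sum_{\tau=1}^{t-1}\sum_{k=1}^{K}\sum_{i=1}^{N}\eta_{i,k}^\tau\,\phi_{i,k}^\tau \Big\|_{\Sigma_{t-1}^{-1}} \le 8\sigma\sqrt{d\log(1+NKtL^2/(d\lambda))\log(4N^2K^2t^2/\delta)} + 4\log(4N^2K^2t^2/\delta).
\]
Substituting $\sigma$, adding back $\sqrt{\lambda}\|\gT^*\|_2$, simplifying the logarithmic factors exactly as in Lemma~\ref{lem:conf} (geometric-mean of the two logs $\le\log(8N^2K^2t^2/\delta)$), and a union bound over the two events then gives $L_t(\gT^*)\le\beta_t$ for all $t\geq1$ with $\beta_t$ as in Eqn.~\eqref{eq:beta:t}.

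I do not anticipate a real obstacle: the only points that need any thought are recognizing the $\lambda\gT^*$-versus-noise split in the first display and checking that the Bernstein concentration is unaffected by the nonlinear link, which holds precisely because $\mu$ is contractive — the very same inequality $\mu(x)\le x$ that keeps the variance bound identical to the linear case. Everything downstream, namely the optimality relation $L_t(\widehat{\gT}_t)\le L_t(\gT^*)$ and the conversion of $L_t(\gT^*)\le\beta_t$ into a confidence statement on $\langle\gT^*-\widehat{\gT}_t,\phi_{i,k}(s,a)\rangle$ via $1/\kappa\le\mu'$, belongs to the subsequent lemmas feeding the generalized bonus~\eqref{eq:gen-bonus} and is out of scope here.
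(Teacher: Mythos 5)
Your proposal is correct and follows essentially the same route as the paper: split off $\lambda\gT^*$ by the triangle inequality, bound its $\Sigma_{t-1}^{-1}$-norm by $\sqrt{\lambda}\|\gT^*\|_2$ via $\Sigma_{t-1}\succeq\lambda I$, and bound the centered-noise sum by $\beta_t-\sqrt{\lambda}\|\gT^*\|_2$ using Lemma~\ref{lem:self-normed} with the same variance proxy as in Lemma~\ref{lem:conf} (valid since $\mu(0)=0$ and $\mu'\le1$ give $\mu(x)\le x$). Your explicit device of applying the self-normalized bound to the centered noise with the linear parameter set to zero is a slightly more careful justification of a step the paper leaves implicit, but it is the same argument.
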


\begin{lem}[Optimism]\label{lem:gen-bonus}
Let Assumption~\ref{assump:generalized_bernoulli}, Assumptions~\ref{assump:2}-\ref{decay} hold. Set the bonus term to be 
   \[
       b_t(s,a) = \frac{4\kappa\Lambda\gamma}{1-\gamma} \sum_{i=1}^N \sum_{k=1}^K (1 \wedge \beta_t \cdot \| \phi_{i,k} (s,a) \|_{\Sigma_{t-1}^{-1}} ).
   \]
   Then with probability at least $1-\delta$, we have the optimistic condition $\widetilde{Q}^*(s,a) \leq Q_t(s,a)$ holds for all $t \geq 1$. 
   
   Furthermore, we have for any $V(s)$ such that $0\leq V(s) \leq \Lambda/(1-\gamma)$, 
   \[
       \gamma| \EE_{s'\sim \PP(s'|s,a)} V(s') - \EE_{s'\sim \PP_{\widehat{\gT}_t}(s'|s,a)} V(s') | \leq b_t(s,a).
   \]
\end{lem}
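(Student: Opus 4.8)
The plan is to follow the proof of Lemma~\ref{lem:bonus} almost line by line, inserting one genuinely new step at the front: converting the implicit estimate $L_t(\gT^*)\le\beta_t$ of Lemma~\ref{lem:gen-conf} into an $\ell_2$-type confidence region $\|\widehat\gT_t-\gT^*\|_{\Sigma_{t-1}}\le\kappa\beta_t$, which is a factor $\kappa$ wider than the linear bound of Lemma~\ref{lem:conf}. This widening is exactly what forces the bonus coefficient in Eqn.~\eqref{eq:gen-bonus} to be $\tfrac{4\kappa\gamma\Lambda}{1-\gamma}$ rather than $\tfrac{2\gamma\Lambda}{1-\gamma}$; once the region is in hand, the remaining steps are a transcription of the linear argument.

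\textbf{Confidence region.} Write $g_t(\gT)=\lambda\gT+\sum_{\tau=1}^{t-1}\sum_{i,k}\mu(\langle\gT,\phi_{i,k}^\tau\rangle)\,\phi_{i,k}^\tau$, so that $L_t(\gT)=\|g_t(\gT)-B_{t-1}\|_{\Sigma_{t-1}^{-1}}$. Since $\mu$ is strictly increasing (as $\mu'\ge1/\kappa>0$) and $\lambda>0$, the map $\gT\mapsto g_t(\gT)-B_{t-1}$ is the gradient of a strictly convex coercive function and hence a bijection, so $\widehat\gT_t$ (the minimizer in Eqn.~\eqref{eq:gen-That}) is the \emph{exact} solution of $g_t(\widehat\gT_t)=B_{t-1}$, i.e. $L_t(\widehat\gT_t)=0$. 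A coordinatewise mean value theorem applied to each $\mu(\langle\cdot,\phi_{i,k}^\tau\rangle)$ gives $g_t(\gT^*)-g_t(\widehat\gT_t)=H_t(\gT^*-\widehat\gT_t)$ with $H_t=\lambda I+\sum_{\tau,i,k}\dot\mu_{i,k}^\tau\,\phi_{i,k}^\tau(\phi_{i,k}^\tau)^\top$ and $\dot\mu_{i,k}^\tau\in[1/\kappa,1]$, whence $\tfrac1\kappa\Sigma_{t-1}\preceq H_t\preceq\Sigma_{t-1}$ (using $\kappa\ge1$). On the event of Lemma~\ref{lem:gen-conf}, $\|g_t(\gT^*)-g_t(\widehat\gT_t)\|_{\Sigma_{t-1}^{-1}}=L_t(\gT^*)\le\beta_t$; combining this with generalized Cauchy--Schwarz and the lower sandwich gives $\tfrac1\kappa\|\gT^*-\widehat\gT_t\|_{\Sigma_{t-1}}^2\le\langle\gT^*-\widehat\gT_t,\,g_t(\gT^*)-g_t(\widehat\gT_t)\rangle\le\beta_t\|\gT^*-\widehat\gT_t\|_{\Sigma_{t-1}}$, hence $\|\widehat\gT_t-\gT^*\|_{\Sigma_{t-1}}\le\kappa\beta_t$ for all $t\ge1$ with probability at least $1-\delta$.

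\textbf{From the region to the bonus.} From here the argument mirrors the proof of Lemma~\ref{lem:bonus} with $\beta_t$ replaced by $\kappa\beta_t$. The marginals $\PP_{i,k}(\cdot|s,a)$ and $\widehat\PP_{i,k}(\cdot|s,a)$ (the $(i,k)$-marginal of $\PP_{\widehat\gT_t}(\cdot|s,a)$) are Bernoulli with means $\mu(\langle\gT^*,\phi_{i,k}(s,a)\rangle)$ and $\mu(\langle\widehat\gT_t,\phi_{i,k}(s,a)\rangle)$ (clipped into $[0,1]$, which only shrinks the distance); using $\mu'\le1$, the trivial bound $\|\PP_{i,k}-\widehat\PP_{i,k}\|_1\le2$, and Cauchy--Schwarz with $\Sigma_{t-1}$, $\|\PP_{i,k}-\widehat\PP_{i,k}\|_1\le 2\big(1\wedge\kappa\beta_t\|\phi_{i,k}(s,a)\|_{\Sigma_{t-1}^{-1}}\big)\le 2\kappa\big(1\wedge\beta_t\|\phi_{i,k}(s,a)\|_{\Sigma_{t-1}^{-1}}\big)$, the last step using $1\wedge\kappa x\le\kappa(1\wedge x)$ for $\kappa\ge1$. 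The factorization Lemma~\ref{lem:factor} (applicable since Assumption~\ref{assump:generalized_bernoulli} keeps the transition a product over $(i,k)$) then yields $\|\PP(\cdot|s,a)-\widehat\PP(\cdot|s,a)\|_1\le 2\kappa\sum_{i,k}(1\wedge\beta_t\|\phi_{i,k}(s,a)\|_{\Sigma_{t-1}^{-1}})$, so $b_t(s,a)$ of Eqn.~\eqref{eq:gen-bonus} satisfies $b_t(s,a)\ge\tfrac{\gamma\Lambda}{1-\gamma}\|\PP(\cdot|s,a)-\widehat\PP(\cdot|s,a)\|_1$. Invoking Lemma~\ref{lem:optimism0} with reward $\widetilde r\in[0,\Lambda]$, transition $\PP_{\widehat\gT_t}$, and bonus $b_t$ then delivers both $\widetilde Q^*(s,a)\le Q_t(s,a)$ for all $t$ and the value-difference estimate for every $V$ with $0\le V\le\Lambda/(1-\gamma)$, completing the proof.

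\textbf{Where the work is.} Everything except the confidence-region step is mechanical; the single place requiring care is turning $L_t(\gT^*)\le\beta_t$ into a $\Sigma_{t-1}$-norm bound through the sandwich $\tfrac1\kappa\Sigma_{t-1}\preceq H_t\preceq\Sigma_{t-1}$ derived from $1/\kappa\le\mu'\le1$ — the standard generalized-linear confidence trick. The subtle point there is that it uses $\widehat\gT_t$ being the \emph{exact} minimizer so that $g_t(\widehat\gT_t)=B_{t-1}$ holds with equality; an inexact optimizer would add a controllable lower-order term to the confidence radius, which one would then carry through $\beta_t$.
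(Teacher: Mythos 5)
Your proof is correct and follows essentially the same route as the paper's: mean value theorem to write $g_t(\gT^*)-g_t(\widehat\gT_t)=H_t(\gT^*-\widehat\gT_t)$, the sandwich $\tfrac1\kappa\Sigma_{t-1}\preceq H_t$ from $\mu'\ge 1/\kappa$ and $\kappa\ge 1$, then Lemma~\ref{lem:factor} and Lemma~\ref{lem:optimism0} exactly as in the linear case. The one place you diverge is the extraction of the confidence region: the paper uses only the minimizer property $L_t(\widehat\gT_t)\le L_t(\gT^*)\le\beta_t$ plus the triangle inequality to get $\|H_t(\widehat\gT_t-\gT^*)\|_{\Sigma_{t-1}^{-1}}\le 2\beta_t$ and hence radius $2\kappa\beta_t$ (which is what the $4\kappa$ coefficient in the bonus is calibrated to), whereas you invoke strong convexity to argue $g_t(\widehat\gT_t)=B_{t-1}$ exactly, i.e.\ $L_t(\widehat\gT_t)=0$, and a Cauchy--Schwarz duality step to get the tighter radius $\kappa\beta_t$; the stated bonus then holds with a factor of $2$ to spare. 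Both arguments are valid; yours is marginally sharper but, as you note, leans on the estimator being an exact zero of the estimating equation, while the paper's triangle-inequality version is robust to an approximate minimizer at no extra cost given the $4\kappa$ coefficient.
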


\begin{proof}[proof of Theorem~\ref{thm:gen-main}]
We can verify that Lemma~\ref{lem:sg}, Lemma~\ref{lem:reg:decomp}, and Lemma~\ref{lem:switching:error} also hold for the generalized linear model. Therefore, the exact same proof of Theorem~\ref{thm:main} applies with Lemma~\ref{lem:bonus} replaced with Lemma~\ref{lem:gen-bonus}. The result only differs by a factor of $\kappa$.
\end{proof}

\subsection{Deferred proofs of Lemmas}
\begin{proof}[proof of Lemma~\ref{lem:gen-hpbound:user}]
    Notice that we have assumed $\mu(0)=0$ and $\mu' \leq 1$. Therefore for $z>0$, $\mu(z) = \mu(z) - \mu(0) \leq z-0 = z$. Then 
    \[
        \EE [(s_{t+1})_{i,k}|s_t,a_t]  = \mu (\sum_j \Ab^k_{i,j} (s_{ta_t})_{j,k}) \leq \sum_j \Ab^k_{i,j} (s_{ta_t})_{j,k}.
    \]
    Then the result holds by applying the same proof of Lemma~\ref{lem:hpbound:user}.
\end{proof}

\begin{proof}[proof of Lemma~\ref{lem:gen-conf}]
    We have 
    \begin{align*}
        L_t(\gT^*) &= \Big\|  \lambda \gT^*  + \sum_{\tau=1}^{t-1} \sum_{k=1}^K \sum_{i=1}^N \big[ \mu (\langle \gT^*, \phi_{i,k}^\tau \rangle )  - (s_{\tau+1})_{i,k} \big] \phi_{i,k}^\tau \Big \|_{\Sigma_{t-1}^{-1}} \\
        &\leq \| \lambda \gT^*  \| _{\Sigma_{t-1}^{-1}} + \Big \|  \sum_{\tau=1}^{t-1} \sum_{k=1}^K \sum_{i=1}^N \big[ \mu (\langle \gT^*, \phi_{i,k}^\tau \rangle )  - (s_{\tau+1})_{i,k} \big] \phi_{i,k}^\tau \Big \|_{\Sigma_{t-1}^{-1}} \\
        &\leq \| \lambda \gT^* \| _{\lambda \boldsymbol I ^{-1}} + \Big \|  \sum_{\tau=1}^{t-1} \sum_{k=1}^K \sum_{i=1}^N \big[ \mu (\langle \gT^*, \phi_{i,k}^\tau \rangle )  - (s_{\tau+1})_{i,k} \big] \phi_{i,k}^\tau \Big \|_{\Sigma_{t-1}^{-1}} \\
        &\leq \sqrt{\lambda} \|\gT^*\|_2 + (\beta_t -  \sqrt{\lambda} \|\gT^*\|_2) \\
        &= \beta_t,
    \end{align*}
    where the last but one inequality holds by applying Lemma~\ref{lem:self-normed} with the same variance upper bound as in the proof of Lemma~\ref{lem:conf}.
    \end{proof}

\begin{proof}[proof of Lemma~\ref{lem:gen-bonus}]
    Let $\widehat\PP =  \PP_{\widehat{\gT}_t}$. By the assumption that $\mu' \leq 1$, we still have 
    \begin{align*}
        \| \PP_{i,k}(\cdot |s,a) - 
        \widehat\PP_{i,k}(\cdot |s,a) \|_1 &\leq 2(1 \wedge |\langle \gT^* - \widehat{\gT}_t , \phi_{i,k}(s,a) \rangle |)\\
        &\leq 2(1 \wedge \|\gT^* - \widehat{\gT}_t \|_{\Sigma_{t-1}} \cdot \|\phi_{i,k}(s,a)\|_{\Sigma_{t-1}^{-1}}).
    \end{align*}
    By Lemma~\ref{lem:gen-conf}, with probability at least $1-\delta$, we have $L_t(\gT^*) \leq \beta_t$ and then $L_t(\widehat{\gT}_t) \leq \beta_t$ for all $t\geq 1$. Therefore, we have
    \begin{align*}
        2\beta_t &\geq L_t(\widehat{\gT}_t) + L_t(\gT^*)   \\
        &\geq \Big \|    \lambda \widehat{\gT}_t  + \sum_{\tau=1}^{t-1} \sum_{k=1}^K \sum_{i=1}^N \big[ \mu (\langle \widehat{\gT}_t, \phi_{i,k}^\tau \rangle )  - (s_{\tau+1})_{i,k} \big] \phi_{i,k}^\tau 
            - \lambda \gT^*  - \sum_{\tau=1}^{t-1} \sum_{k=1}^K \sum_{i=1}^N \big[ \mu (\langle \gT^*, \phi_{i,k}^\tau \rangle )  - (s_{\tau+1})_{i,k} \big] \phi_{i,k}^\tau 
          \Big \|_{\Sigma_{t-1}^{-1}} \\
        &= \Big \| \Big[ \lambda \boldsymbol I +  \sum_{\tau=1}^{t-1} \sum_{k=1}^K \sum_{i=1}^N \big[ \mu' (\langle \widetilde {\gT}, \phi_{i,k}^\tau \rangle )   \big]  \phi_{i,k}^\tau (\phi_{i,k}^\tau)^\top \Big]   \cdot (\widehat{\gT}_t - \gT^*)  \Big \|_{\Sigma_{t-1}^{-1}} \\
        &\geq 1/\kappa \cdot\| \Sigma_{t-1}  \cdot  (\widehat{\gT}_t - \gT^*)\|_{\Sigma_{t-1}^{-1}} \\
        &= 1/\kappa \cdot \| \widehat{\gT}_t - \gT^* \|_{\Sigma_{t-1} },
    \end{align*}
    where we use Lagrange Mean Value Theorem in the first equality and $\mu'(\cdot) \geq 1/\kappa$. Then the desired result holds by applying Lemma~\ref{lem:factor} and Lemma~\ref{lem:optimism0} in the same way as the proof of Lemma~\ref{lem:bonus}.
\end{proof}
\end{document}